\newtheorem{observation}{Observation}
\newcommand{\prob}{\operatorname{prob}}
\newcommand{\cov}{\operatorname{cov}}
\newcommand{\supp}{\operatorname{supp}}
\newcommand{\od}{\operatorname{oddsratio}}
\begin{document}

\markboth{J. Li et al.}{From Observational Studies to Causal Rule Mining}

\title{From Observational Studies to Causal Rule Mining}
\author{JIUYONG LI
\affil{University of South Australia}
THUC DUY LE
\affil{University of South Australia}
LIN LIU
\affil{University of South Australia}
JIXUE LIU
\affil{University of South Australia}
ZHOU JIN
\affil{University of Science and Technology China}
BINGYU SUN
\affil{Chinese Academy of Sciences}
SAISAI MA
\affil{University of South Australia}}


\begin{abstract}
Randomised controlled trials (RCTs) are the most effective approach to causal discovery, but in many circumstances it is impossible to conduct RCTs. Therefore observational studies based on passively observed data are widely accepted as an alternative to RCTs. However, in observational studies, prior knowledge is required to generate the hypotheses about the cause-effect relationships to be tested, hence they can only be applied to problems with available domain knowledge and a handful of variables. In practice, many data sets are of high dimensionality, which leaves observational studies out of the opportunities for causal discovery from such a  wealth of data sources. In another direction, many efficient data mining methods have been developed to identify associations among variables in large data sets. The problem is, causal relationships imply associations, but the reverse is not always true. However we can see the synergy between the two paradigms here. Specifically, association rule mining can be used to deal with the high-dimensionality problem while observational studies can be utilised to eliminate non-causal associations. In this paper we propose the concept of causal rules (CRs) and develop an algorithm for mining CRs in large data sets. We use the idea of retrospective cohort studies to detect CRs based on the results of association rule mining. Experiments with  both synthetic and real world data sets have demonstrated the effectiveness and efficiency of CR mining. In comparison with the commonly used causal discovery methods, the proposed approach in general is faster  and has better or competitive performance in finding correct or sensible causes. It is also capable of finding a cause consisting of multiple variables, a feature that other causal discovery methods do not possess.
\end{abstract}

\category{H.2.8}{Database Applications}{Data Mining}
\terms{Algorithms}
\keywords{causal discovery, association rule, cohort study, odds ratio}

\acmformat{Jiuyong Li, Thuc Duy Le, Lin Liu, Jixue Liu, Zhou Jin, Bingyu Sun, and Saisai Ma, 2015. From Observational Studies to Causal Rule Mining.}

\begin{bottomstuff}
A preliminary version of this work was published in the Proceedings of 2013 IEEE 13th International Conference on Data Mining Workshops, the First IEEE ICDM Workshop on Causal Discovery 2013 (CD2013), pp. 114-123, Dallas, Texas, USA, December 7-10, 2013.

Authors' addresses: J. Li, T. D. Le, L. Liu J. Liu {and} S. Ma, School of Information Technology and Mathematical Sciences, University of South Australia, Mawson Lakes, SA, 5095, Australia; Z. Jin, Department of Automation, University of Science and Technology, Hefei 230026, China; B. Sun, Institute of Intelligent Machines, Chinese Academy of Sciences, Hefei 230031, China.

Permission to make digital or hard copies of part or all of this work for personal or classroom use is granted without fee provided that copies are not made or distributed for profit or commercial advantage and that copies bear this notice and the full citation on the first page. Copyrights for third-party components of this work must be honored. For all other uses, contact the Owner/Author.

Copyright 2015 held by Owner/Author

Publication title/2157-6904/2015/MonthOfPublication - ArticleNumber

http://dx.doi.org/10.1145/2746410
\end{bottomstuff}

\maketitle

\section{Introduction}\label{sectionIntro}

Causal discovery aims to infer the cause-effect relationships between variables. Such relationships imply the mechanism of outcome variables taking their values and how the change of cause variables would lead to the change of the outcome variables \cite{Spirtes2010}. In other words, causality provides the basis for explaining how things have happened and for predicting how the outcomes would be when their causes have changed. Therefore apart from being a fundamental philosophical topic, causality has been studied and utilised in almost all disciplines, e.g. medicine, epidemiology, biology, economics, physics, social science, as a basic and effective tool for explanation, prediction and decision making \cite{Guyon2010,Kleinberg2011}. Some specific examples include the applications in medicine for developing new treatments or drugs for a disease; and in economics for forecasting the results of a particular financial policy and in turn to assist decision and/or policy making.

Randomised controlled trials (RCTs) are recognised as the gold standard for testing the effects of interventions \cite{Shadish2002,Stolberg2004}. However, it is also widely acknowledged that in many cases it is impossible to conduct RCTs due to cost and/or ethical concerns. For example, to find out the causal effect of alcohol consumption on heart diseases, it will be unethical to require an experiment participant to drink. Sometimes it is totally forbidden to manipulate a possible cause factor, for example, in a life-threatening situation. 

Under these circumstances, observational studies \cite{Rosenbaum2010,Concato2000} are considered as the best alternatives to RCTs, and it has been shown that well-designed observational studies can achieve comparative results as RCTs \cite{Concato2000}. As suggested by the name, observational studies are based on passively observed data, and they do not require manipulation of an exposure (i.e. a potential cause factor). There are two main types of observational studies for causal discovery, cohort studies and case-control studies \cite{Song2010,Blackmore2004,Euser2009}. In a cohort study, based on the status of being exposed to a potential cause factor (e.g. certain radiation), an exposure group of subjects and a non-exposure or control group of subjects are selected, and then followed to observe the occurrence of the outcome (e.g. cancer). In a case-control study, subjects are selected based on the status of the outcome, i.e. the case group consisting of subjects with the outcome and a control group of subjects without the outcome are identified, and then their status of exposure to the potential cause factor is examined. In both types of studies, the effect of an exposure on the outcome is determined by comparing the difference between the exposed/case group and control group. In order to achieve convincing result, an observational study must try to replicate a RCT as much as possible, i.e. the covariate distributions of the two contrasting groups should be as close as possible.

Although observational studies provide an effective approach to causal discovery, they work in the fashion of hypothesis testing, that is, at the commencement of a study, a cause-effect relationship needs to be hypothesised. Then data are collected or retrieved from databases for testing the hypothesis. This requires the prior knowledge or anticipation of the exposures and outcomes, which may not always be available, especially when the number of variables under study is large and the purpose is to explore possible cause-effect relationships, instead of validating an assumed causal relationship. For example, in the study of gene regulation, we may have a clear idea of the possible genetic diseases (outcomes), but which genes could be the possible genetic causes of the diseases may not be known at all. Given the huge number of genes (tens of thousands), it is infeasible to test each gene to find the causes. Therefore to exploit the wealth information in observational data using the well-established methodology of observational studies, we firstly need some efficient ways to generate the hypotheses with high confidence.

Another challenge with observational studies (as well as RCTs) is that even with domain knowledge, it is difficult to foresee a combined cause. For example, multiple genes may work together to cause a disease, which is normally hard to identify with domain knowledge only.

This is where we can take the advantage of the outcome of data mining research. In the last two decades, huge efforts have been made on association rule mining \cite{Agrawal93} and many efficient algorithms have been developed to discover association rules from large data sets \cite{DataMiningBookHan2005}. An association rule represents interesting associations among variables, for example,  $pizza \rightarrow garlic\ bread$; $\{strong\ wind,\ high\ temperature\} \rightarrow falling\ trees$. Although statistical associations do not necessarily mean causality (for instance, buying garlic bread and pizza together does not indicate that buying one is the cause of buying the other. Mostly likely this is a consequence of a meal deal), it is commonly accepted that associations are necessary for causality.

Our idea is thus to utilise the synergy of observational studies and association rule mining to develop an efficient method for \emph{automated} discovery of  causal relationships in large data sets. We firstly use association rule mining to find out the hypothesised cause-effect relationships (represented as association rules) regarding an outcome. Then for each of the hypotheses, we conduct an observational study (e.g. a cohort study) to test if the exposure is a real cause, i.e. to identify if the association rule is a \emph{causal rule}.

As the LHS (left-hand-side or antecedent) of an association rule can comprise multiple attributes, a favourable consequence of using association rule mining here is that it can generate hypothesised causal relationships with compound exposure, such as the rule shown above $\{strong\ wind,\ high\ temperature\} \rightarrow falling\ trees$. In this case, we consider the two attributes as one variable/exposure in our observational studies, hence the validity of the combined cause can be tested.

In the rest of the paper, we will present the definition of causal rules and our approach to identifying CRs (Section 3), the algorithm for mining CRs (Section 4), and the experiment results demonstrating the effectiveness and efficiency of the algorithm (Section 5). Before the presentation, in Section 2, we firstly outline the related work and show the contribution of this paper.

\section{Related Work and Contribution}\label{sectionRelatedWork}
Observational studies \cite{Rosenbaum2010,Concato2000} have had a very long history, and there has been a great deal of research on observational studies, by both statisticians and practitioners in medicine and other application areas. The main focus of the research is on how to design good observational studies, including selection of subjects or records, methods for identifying exposed and non-exposed groups to replicate RCTs as closely as possible, and the ways for analysing the data. However, as far as we know, there is little work done on using observational studies for automated causal discovery in large, especially high-dimensional data.

In the field of computer science, causal discovery from observational data has attracted enormous research efforts in the past three decades. Currently Bayesian network techniques are at the core of the methodologies for causal discovery in computer science \cite{Spirtes2010}. Bayesian networks provide a graphical representation of conditional independence among a set of variables. Under certain causal assumptions, a directed edge between two nodes (variables) in a Bayesian network represents a causal relationship between the two variables \cite{Spirtes2010,Spirtes2001book}. Over the years, many algorithms have been developed for learning Bayesian networks from data \cite{Neapolitan2003,Spirtes2010}. However, up to now it is only feasible to learn a Bayesian network with dozens of variables, or hundreds if the network is sparse \cite{Spirtes2010}. Therefore, in practice it is infeasible to identify causal relationships using Bayesian network based approaches in most cases.

Indeed the difficulties faced by these causal discovery approaches originate from their goal, i.e. to discover a complete causal model of the domain under consideration. Such a model indicates all pairwise causal relationships among the variables. This, unfortunately, is essentially impossible to achieve when the domain contains a large number of variables. It has been shown that in general learning a Bayesian network is NP-hard \cite{Chickering2004}.

Some constraint based approaches do not search for a complete Bayesian network, so they can be more efficient for causal relationship discovery. Several such algorithms have shown promising results \cite{Cooper:LCD1997,Silverstein:CCU2000,Mani:Y2006,Pellet:Blanket2008,Aliferis2010}. Based on observational data, these methods determine conditional independence of variables and learn local causal structures. However some of the methods are only capable of discovering the causal relationships represented with some fixed structures, e.g. CCC \cite{Cooper:LCD1997}, CCU \cite{Silverstein:CCU2000} and the Y structures \cite{Mani:Y2006}, and they do not identify causal relationships that cannot be represented with these structures.  The complexity of other methods for learning a partial Bayesian network in general is still exponential to the number of variables, unless accuracy and/or completeness are traded with efficiency \cite{Aliferis2010}.



Our method tackles the problem of causal discovery from a different perspective. It integrates two well-established methodologies in two different fields for relationship discoveries. The main contribution of this paper is to propose a statistically sound and computational efficient causal discovery method for causal relationship exploration. Cohort studies have been widely accepted for identifying causal links in health, medical and social studies, so the use of cohort studies to uncover causal relationships is methodologically sound. In this paper, the theoretical validity of the proposed method has also been justified by its connection with a well-known causal inference framework -- the potential outcome model \cite{Pearl2000,Morgan2007}.  Our goal is to automate causal relationship discovery in data, making it possible to explore causal relationships in both large and high dimensional data sets.

 Our work also contributes to the area of association rule mining. Association rule mining is a main data mining technique and has many applications in various fields, but a major obstacle of association rule mining  is that it produces too many rules and many of them are uninteresting since they represent random associations in a data set \cite{Webb2008,Webb2009,Tan2004,Lenca2008}. Cohort studies enable us to filter out a large proportion of such uninteresting rules and keep the most interesting ones for a broad range of applications since discovering causal relationships is the goal of the majority of applications.

 This paper is an extension of our preliminary work in~\cite{LiCR2013}, with three major developments: (1) A more explicit presentation of the motivation, goal and contribution of the research in the newly written Sections 1 and 2; (2) A new section (Section \ref{sectionPotentialOucome}) for justifying the validity of the CR framework; (3) A new set of experiments with a total of 13 synthetic data sets for evaluating the performance and scalability of the proposed method , and new experiments on investigating the effect of different matching methods (see Section 5).

\section{Causal Rules}

\subsection{Notations}

Let $D$ be a data set for a set of binary variables $(X_1, X_2, \dots, X_m, Z)$, where $X_1, X_2, \dots, X_m$ are \emph{predictor} variables and $Z$ is a \emph{response} variable. Values of $Z$ are of user's interest, e.g. having a disease or being normal. Considering a binary data set makes the conceptual discussions in the paper easier, and it does not lose the generality of a data set that contains attributes of multiple discrete values. For example, a multi-valued data set for the variables (Gender, Age, $\dots$) is equivalent to a binary data set for the variables (Male, Female, 0-19, 20-39, 40-69, $\dots$). In this paper, both the Male and Female variables are kept to allow us to have combined variables that involve them separately, for example, (Female, 40-59, Diabetes) and (Male, 40-59, Smoking).

$P$ is a \emph{combined} variable if it consists of multiple variables $X_1, \dots, X_n$ where $n \ge 2$, and $P=1$ when $(X_1 = 1,  \dots, X_n=1)$ and $P=0$ otherwise.

A \emph{rule} is in the form of $(P=1) \to (Z=1)$, or $p \to z$ where $z$ stands for $Z=1$ and $p$ for $P=1$. $p$ is also called a $\emph{k-pattern}$ where $k$ is the length of $P$ (the number of component variables of $P$). Our ultimate goal is to find out whether $p \to z$ is a causal rule.

\subsection{Association rules}\label{subsectionAR}

With our approach, we first consider the association between $P$ and $Z$ since an association is normally necessary for a causal relationship.

Odds ratio is a widely used measure for associations in retrospective studies \cite{Fleiss2003}, and we define the odds ratio of a rule as follows.

\begin{definition}[Odds ratio of a rule] \label{def_oddsratio} Given the following contingency table of a rule, $p \to z$,
\begin{center}
\begin{tabular}{|c|c|c|}
\hline
 & $z (Z=1)$ & $\neg z (Z=0)$ \\
\hline $p (P=1)$ & $\supp(pz)$ & $\supp(p \neg z)$ \\
\hline $\neg p (P=0)$  & $\supp( \neg p z)$ & $\supp(\neg p \neg z)$ \\
\hline
\end{tabular}
\end{center}

\noindent where $\supp(x)$ indicates the support of pattern $X$, the count of value $x$ in the given data set, $D$, and we have $\supp (p) = \supp(pz) + \supp(p \neg z)$, $\supp (z) = \supp(pz) + \supp(\neg p z)$, and $\supp(pz) + \supp(p \neg z) + \supp(\neg p z) + \supp(\neg p \neg z) = n$, where $n$ is the number of records in the data set, then the odds ratio of the rule $p \to z$ on $D$ is defined as:
\begin{equation}\label{orationRule}
\od_{\textsc{d}} (p \to z) = \frac{\supp(pz)*\supp(\neg p \neg z)}{\supp(p \neg z)*\supp(\neg p z)}
\end{equation}
\end{definition}

From the definition, the odds ratio of a rule is the ratio of the odds of value $z$ occurring in group $P=1$ to the odds of value $z$ occurring in group $P =0$, so an odds ratio of 1 means that $z$ has an equal chance to occur in both groups, and an odds ratio deviating from 1 indicates an  association (positive or negative) between $Z$ and $P$.

\begin{definition}[Association rule] Using the notations in Definition~\ref{def_oddsratio}, the support of a rule $p \to z$ is defined as $\supp(p \to z) = \supp(pz)$. Given a data set $D$, let $min\_supp$ and $min\_oratio$ be the minimum support and odds ratio respectively, $p \to z$ is an association rule if $\supp(p \to z)>min\_supp$ and $\od_{\textsc{d}} (p \to z)>min\_oratio$, and LHS($p \to z$) = $p$ and RHS($p \to z$) = $z$.
\end{definition}

In the definition, we consider $z$ as the RHS of a rule. An association rule that has $\neg z$ ($Z=0$) as its RHS can be defined in the same way. These association rules ($p \to z$ and $p \to \neg z$) are class association rules~\cite{LiuHsuMa98} where the confidence ($\prob (z | p)$) is replaced by the odds ratio.  Furthermore, only positive association between a predictor variable and the response variable is considered in the above definition as in most cased in practice, we are concerned about the occurrence of the predictor (i.e. $P=1$) leading to the occurrence of the response (i.e. $Z=1$).

We note that the distribution of the values of the response variable can be skewed and a uniform minimum support may lead to too many rules for the frequent values and few rules for the infrequent values. In the implementation, we use the local support that is relative to the frequency of a value in the response variable, i.e. $\ lsupp(p \to z) = \frac{\supp(pz)}{supp(z)}$. The local support is a ratio and can be set the same, say 5\%, for rules that have $z$ or $\neg z$ as the RHS.

Traditional association rules are defined by support and confidence \cite{Agrawal93}. An association rule in the support and confidence scheme may not show a real association between the LHS and RHS of a rule \cite{Brin97}. Therefore in the above definition, we use odds ratio as the indicator of association. The minimum odds ratio in the definition may be replaced by a significance test on $\od_{\textsc{d}} (p \to z) > 1$ to ensure that an association rule indicates a significant association between the LHS and RHS of the rule.

The test of significant association is determined as the following.

Let $\omega$ be the odds ratio of the rule $p \to z$ on the given data set $D$, i.e. $\od_{\textsc{d}} (p \to z) = \omega$. The confidence interval of $\omega$, $[\omega_{-}, \omega_{+}]$, is defined as \cite{Fleiss2003}:\\

\[
\omega_-=\exp(\ln{\omega}-z' \sqrt{\frac{1}{\supp(pz)}+\frac{1}{\supp(p \neg z)}+\frac{1}{\supp(\neg p z)}+\frac{1}{\supp(\neg p \neg z)}})
\]
\noindent and
\[
\omega_+=\exp(\ln{\omega}+z' \sqrt{\frac{1}{\supp(pz)}+\frac{1}{\supp(p \neg z)}+\frac{1}{\supp(\neg p z)}+\frac{1}{\supp(\neg p \neg z)}})
\]

\noindent where $z'$ is the critical value corresponding to a desired level of confidence ($z'=1.96$ for 95\% confidence). $\omega_-$ and $\omega_+$ are  the lower and upper bounds respectively of an odds ratio at a confidence level. If $\omega_- > 1$, the odds ratio is significantly higher than 1, hence $P$ and $Z$ are associated. Equivalently, $p \to z$ is an association rule. 

An important advantage of the above process is that it is automatically adaptive to the size of a data set. For a large data set, the confidence interval of an odds ratio is small and hence a small odds ratio can be significantly higher than 1. For a small data set, the confidence interval of an odds ratio is large and hence a large odds ratio is needed to be significantly higher than 1.


However, statistically reliable associations do not always indicate causal relationships although causality is mostly observed as associations in data, which can be illustrated by the following example.

\begin{example}
\label{ex_simpsionParadox}
Suppose that we have generated an association rule: ``Gender = $m$" $\to$ ``Salary = $low$" from a data set with the following statistics:
\begin{center}
\begin{tabular}{|c|c|c|}
\hline
 & Salary = $low$ & Salary = $high$ \\
\hline Gender = $m$ & 185 & 120 \\
\hline Gender = $f$  & 65 & 60 \\
\hline
\end{tabular}
\end{center}
The ratio of low salary earners to high salary earners in the male group is 1.54:1 while the ratio in the female group is 1.08:1. In other words, the odds for a male worker receiving a low salary is 1.54 and the odds for a female worker receiving a low salary is 1.08. The odds ratio of male and female groups receiving low salaries is 1.43, which is greater than 1. Therefore as described previously, this odds ratio indicates a positive association between ``Gender = $m$'' and ``Salary = $low$''.

Is this association valid? Let us do further analysis by stratifying the samples by the Education attribute. Assume that the statistics of the stratified data sets are:

\begin{center}
\begin{tabular}{|c|c|c|}
\hline
 & Salary = $low$ & Salary = $high$ \\
\hline Gender = $m$ \& College = $y$ & 5 & 20 \\
\hline Gender = $f$ \& College = $y$ & 15 & 40 \\
\hline
\end{tabular}
\end{center}
and

\begin{center}
\begin{tabular}{|c|c|c|}
\hline
 & Salary = $low$ & Salary = $high$ \\
\hline Gender = $m$ \& College = $n$ & 180 & 100 \\
\hline Gender = $f$  \& College = $n$ & 50 & 20 \\
\hline
\end{tabular}
\end{center}

The above two tables indicate a negative association between ``Gender = $m$'' and ``Salary = $low$'' because the odds ratio in the College education group is 0.67 and odds ratio in the non-College education group is 0.72. Both contradict the association rule ``Gender = $m$'' $\to$ ``Salary = $low$''.

We obtain two conflicting results here. This means that an association may be volatile in a sub data set or a super data set. This is a phenomenon of the famous Simpson Paradox~\cite{Pearl2000}, indicating that associations may not imply causal relationships.
\end{example}
Therefore our idea is to conduct a retrospective cohort study to detect true causal relationships from identified association rules.

\subsection{Cohort study}

As discussed in Section \ref{sectionIntro}, when randomised controlled trials are practically impossible, observational studies are often used as the alternative approach to finding out the possible cause-effect relationships. A major type of observational studies is cohort studies, which can be conducted in either of the two ways, prospective and retrospective \cite{Euser2009,Fleiss2003}. In a perspective cohort study, researchers follow cohorts over time to observe their development of a certain outcome. In a retrospective study, researchers look back at events that already occurred. In a data mining setting, as the data we have are historical records, we adopt the idea of a retrospective cohort study in this paper.

A retrospective cohort study selects individuals who have exposed and have not exposed to a suspected risk factor but are alike within many other aspects. For example, middle aged male who have been smoking and who have not been smoking for a certain time period are selected for studying the effect of smoking on lung cancer. Here smoking is the risk factor or \emph {exposure variable}, and ``middle aged'' and ``males'' indicate the common characteristics shared by the two cohorts. A significant difference in the value of the outcome or response variable (lung cancer) of the two cohorts indicates a possible causal relationship between the exposure variable and the response variable.

In the rest of the paper, with a binary exposure variable, we call the cohort where the exposure variable takes value 1 the \emph{exposure group}, the cohort where the exposure variable takes value 0 the \emph{non-exposure group}, and the set of variables determining the common characteristics of the two groups the \emph{control variable set}.

From the above description, the core requirement for a cohort study is to obtain the matched exposure and non-exposure groups such that the distribution of control variable set of the two groups are the same or very similar. For example, in a cohort study to test whether gender is a cause of salary difference, the exposure variable is gender and the control variable set consists of variables: education, profession, experience and location. From a given data set, we will need to select samples for the exposure and non-exposure groups so that the two groups have the same distribution regarding the control variables. Then if there is a significant difference in salary between the two groups, we can conclude that gender is a cause of salary difference.

In the following, we will define causal rules using the idea of retrospective cohort studies.

\subsection{Causal rule definition}
Given an association rule as a hypothesis that the LHS of the rule causes its RHS. The variable of the LHS is an exposure variable and the variable of the RHS is the response variable. Let all other variables be included in the control variable set initially. We will discuss how to refine this control variable set  in Section \ref{sectionControlVariable}.

\subsubsection{Fair data sets}
Given a data set $D$, for an exposure variable, we use the following process to select samples for the exposure and non-exposure groups (while the RHS response is blinded). We firstly pick up a record $t_i$ containing the LHS factor ($P=1$), and then pick up another record $t_j$ of which $P=0$, and both $t_i$ and $t_j$ have the ``matched'' values for all the control variables. Then $t_i$ is added to the exposure group, $t_j$ is added to the non-exposure group, and both are removed from the original data set. This process repeats until no more matched pairs can be found. As a result, the distributions of the control variables in the exposure and non-exposure groups are identical or similar to each other.

We formulate the above discussions as the following definition.

\begin{definition}[Matched record pair] Given an association rule $p \to z$ and a set of control variables $C$, a pair of records  match if one contains value $p$, the other does not, and both have the matched values for $C$ according to certain similarity measure.
\end{definition}

The simplest matching is the exact matching, in which we require a pair of records have exactly the same values for control variables. For example, assume that $C = (A, B, E)$ is the control variable set for association rule $p \to z$, then records $(P=1, A=1, B=0, E=1)$ and $(P=0, A=1, B=0, E=1)$ form a matched pair. Many other similarity measures can be used for finding matched pairs of records, e.g. Euclidean distance, Jaccard distance \cite{DataMiningBookHan2005}, Mahalanobis distance  and propensity score \cite{stuart2010}, each having its own merit and disadvantages. As this paper is focused on developing and evaluating the idea of integrating association rule mining and cohort studies for causal discovery, we do not conduct extensive investigation on the different matching methods, and in our experiments, we use the exact matching and compare it with Jaccard distance matching.

\begin{definition} \label{DefFairDataSet}[Fair data set for a rule] Given an association rule $p \to z$ that has been identified from a data set $D$ and a set of control variables $C$, the fair data set $D_f$ for the rule is the maximum sub data set of $D$ that contains only matched record pairs from $D$.
\end{definition} 

\begin{example}
\label{ex_matches}
Given an association rule $a \to z$ identified using the following data set, and the control variable set $C = (M, F, H, U, P)$,  where $M$ stands for Male, $F$ for Female, $H$ for High school graduate, $U$ for Undergraduate, and $P$ for Postgraduate.

\begin{center}
\begin{tabular}{|c|cccccc|c|}
\hline
ID & A & M & F & H & U & P & Z \\
\hline
1 & 1 & 0 & 1 & 0 & 0 & 1 & 1 \\
2 & 1 & 0 & 1 & 0 & 1 & 0 & 1 \\
3 & 1 & 1 & 0 & 1 & 0 & 0 & 0 \\
4 & 1 & 1 & 0 & 0 & 0 & 1 & 1 \\
5 & 0 & 0 & 1 & 0 & 0 & 1 & 0 \\
6 & 0 & 0 & 1 & 0 & 1 & 0 & 0 \\
7 & 0 & 1 & 0 & 1 & 0 & 0 & 0 \\
8 & 0 & 1 & 0 & 1 & 0 & 0 & 1 \\
\hline
\end{tabular}
\end{center}

With exact matching, records (\#1, \#5), (\#2, \#6) and (\#3, \#7) form three matched pairs. A fair data set for $a \to z$ includes records (\#1, \#2, \#3 \#5, \#6, \#7).
\end{example}

In the above definition, the requirement of the maximum sub data set of $D$ is for the best utilisation of the data set. 

Matches in a data set are not unique. A record may match more than one record. For example, (\#3, \#7) and (\#3, \#8) both are matched pairs (in terms of record \#3). When there are two or more possible matches, we select a matched record randomly without knowing the value of  $Z$. In the experiments, we show that such  a random selection will cause variance in the results (different causal rules validated in different runs), so we pick frequently supported rules in multiple runs to reduce the variance. However, the experiments also show that the variance is small in large data sets (one to two rule difference in three runs). Even in a small data set, more than 80\% rules are consistent over different runs.

Since with a fair data set for a rule the exposure and non-exposure groups are identical or similar except for the value of the exposure variable, if there is a significant difference in the values of the response value between the two groups, it is reasonable to assume that the difference of the outcome is caused by the difference of the values of the exposure variable.

Next, we discuss how to detect the statistical difference of the values of the response variable between the exposure and non-exposure groups, which will provide us the method for testing whether an association rule is a causal rule or not.

\subsubsection{Causal rules}\label{subsectionCausalRules}
When the values of the response variable are taken into consideration, there are four possibilities for a matched pair: both records containing $z$, neither containing $z$, record ($P=1$) containing $z$ and record ($P=0$) not; record ($P=0$) containing $z$ and record ($P=1$) not. The counts of the four different types of matched pairs in the fair data set for rule $p \to z$ can be represented as the following:

\begin{center}
\begin{tabular}{|c|c|c|}
\hline
 & \multicolumn{2}{|c|}{$P=0$} \\
\hline $P=1$ & $z$ & $\neg z$ \\
\hline $z$ & $n_{11}$ & $n_{12}$ \\
\hline $\neg z$ & $n_{21}$ & $n_{22}$ \\
\hline
\end{tabular}
\end{center}

In this table $n_{11}$ is the number of matched pairs containing $z$ in both the exposure and non-exposure groups; $n_{12}$ the number of matched pairs containing $z$ in the exposure group and $\neg z$ in the non-exposure group; $n_{21}$ the number of matched pairs containing $\neg z$ in the exposure group and $z$ in the non-exposure group; and $n_{22}$ the number of matched pairs containing $\neg z$ in both the exposure and non-exposure groups. In Example \ref{ex_matches},  $n_{11}=0$,  $n_{12}=2$, $n_{21}=0$, and $n_{22}=1$.

Using the above notation, we can have the following definition \cite{Fleiss2003}:
\begin{definition}[Odds ratio of a rule on its fair data set] \label{ORFairDataSet} The odds ratio of an association rule $p \to z$ on its fair data set $D_f$ is:
\begin{equation}\label{oratioFairData}
\od_{D_f} (p \to z) = \frac{n_{12}}{n_{21}}
\end{equation}
\end{definition}

In our experiments, we replace zero count by 1 to avoid infinite odds ratios.

The above definition leads to the definition of a causal rule:
\begin{definition}[Causal rule] \label{DefCArule} An association rule $(p \to z)$ indicates a causal relationship between $P$ and $Z$ (the variables for its LHS and RHS) and thus is called a causal rule, if its odds ratio on its fair data set,  $\od_{D_f} (p \to z) > min\_oratio$, where $min\_oratio$ is the minimum odds ratio.
\end{definition}


Alternatively, to check if an association rule is a causal rule, we can use the significance test on the odds ratio of the rule on its fair data set with matched pairs. Let $\od_{D_f} (p \to z) = \omega'$ in the fair data set, the confidence interval of the odds ratio for matched pairs is defined as ~\cite{Fleiss2003}:

 \[
\omega_-=\exp(\ln{\omega}-z' \sqrt{\frac{1}{n_{12}}+\frac{1}{n_{21}}})
\]
\noindent and
\[
\omega_+=\exp(\ln{\omega}+z' \sqrt{\frac{1}{n_{12}}+\frac{1}{n_{21}}})
\]

\noindent where $z'$ is the critical value corresponding to a desired level of confidence ($z'=1.96$ for 95\% confidence) and $\omega'_-$ is the lower bound of $\od_{D_f} (p \to z)$ in the confidence level. If $\omega'_- > 1$, the odds ratio is significantly higher than 1, then we conclude that $P$ is a cause of $Z$. 

Based on Definition \ref{DefCArule}, testing if an association rule is a causal rule becomes the problem of finding the fair data set for the rule. A fair data set simulates the controlled environment for testing the causal hypothesis represented by an association rule. When the odds ratio of an association rule on its fair data set is significantly greater than 1, it means that a change of the response variable is resulted from the change of the exposure variable. We provide further justifications in the following section.

 \subsection{Justifications for the definition of causal rules}\label{sectionPotentialOucome}

The potential outcome or counterfactual model~\cite{Pearl2000,Morgan2007} is a major framework for causal inference and it is widely used in social science, health and medical research. In this section, we will demonstrate that the causal rules defined over a fair data set is consistent with the causal relationships modelled under the potential outcome framework.

In the potential outcome model, each individual $i$ in a population has two potential outcomes with respect to a treatment: when taking the treatment ($T_i=1$), the potential outcome is $Z^1_i$; and when not taking the treatment ($T_i=0$), the potential outcome is $Z^0_i$, where $Z^1_i$ and $Z^0_i$ are random variables taking values in $\{0, 1\}$. $Z^j_i=1$ ($j\in\{0,1\}$) stands for an outcome of interest, such as a recovery.

In practice, we are only able to observe one potential outcome ($Z^1_i$ or $Z^0_i$) since an individual can only be placed in either the treatment group ($T_i=1$) or the control group ($T_i=0$), and the other potential outcome will need to be estimated.  For example, if we know that Jack did not take Panadol (i.e. $T_i=0$ considering Panadol is the treatment), and now he gets a high temperature (i.e. $Z^0_i=1$ assuming high temperature is an outcome), the question that we are asking is, what the outcome would be if Jack had taken Panadol, i.e. we want to know the potential outcome $Z^1_i$. So the potential outcome model is also called counterfactual model.

Let us assume that we have both $Z^1_i$ and $Z^0_i$ of an individual $i$. With the potential outcome model, the causal effect of the treatment on $i$ is defined as:
\begin{equation}\label{causalEffect}
\delta_i = Z^1_i - Z^0_i
\end{equation}

We often aggregate the causal effects on individuals in the population (or samples) and obtain the average causal effect as the following, where $E[.]$ is the expectation of a random variable.
\begin{equation}\label{aggCausalEffect}
E[\delta_i] = E[Z^1_i] - E[Z^0_i]
\end{equation}
In the above equation $i$ is kept as in other work on the counterfactual framework, to indicate individual level heterogeneity of potential outcomes and causal effects~\cite{Morgan2007}.

To link the above discussion to our definition of causal rules, treatment $T$ and $Z^j_i$ ($j\in\{0,1\}$) are the exposure variable $P$ and the response variable $Z$ respectively in the causal rule definition. In the following, we keep using the notation of the potential outcome framework.


Since we are only able to observe one of the two potential outcomes for each individual $i$, the causal effect in equation (\ref{aggCausalEffect}) cannot be estimated from any data set directly. However, it can be estimated under a perfect stratification of the data~\cite{Morgan2007}, where for a stratum samples within treatment and control groups are collectively indistinguishable from each other on the values of the stratifying variables and the samples are only different on the observed treatment status. Furthermore the outcome status of a sample is purely random. In this case, we can assume that:
\begin{equation}\label{eqnCausalEffect1}
E[Z^1_i\mid T_i=0, D_{ps}] = E[Z^1_i\mid T_i=1, D_{ps}]
\end{equation}
\begin{equation}\label{eqnCausalEffect2}
E[Z^0_i\mid T_i=1, D_{ps}] = E[Z^0_i\mid T_i=0, D_{ps}]
\end{equation}
\noindent where $S$ represents that the data set is perfectly stratified using the stratifying variables.

The above equations indicate that the potential outcome of an individual taking a treatment (in fact she/he has not) can be estimated by the `real' outcome of the matched individual who has taken the treatment. Similarly, the potential outcome of an individual not taking a treatment (in fact she/he has taken) can be estimated by the `real' outcome of the matched individual who has not taken the treatment.


Samples in a fair data set in fact are perfectly stratified, as samples in the exposure and non-exposure groups have the same distribution in terms of the values of control variables, and the value of the response variable of a sample in the exposure or in the non-exposure group is random. Therefore according to equations (\ref{eqnCausalEffect1}) and (\ref{eqnCausalEffect2}), for a fair data set $D_f$, we have:
\begin{equation}\label{eqnCausalEffectF1}
E[Z^1_i \mid T_i=0, D_f] = E[Z^1_i \mid T_i=1, D_f]
\end{equation}
\begin{equation}\label{eqnCausalEffectF2}
E[Z^0_i \mid T_i=1, D_f] = E[Z^0_i \mid T_i=0, D_f]
\end{equation}

Let us now show how to estimate the causal effect, $E[\delta_i]$ with a fair data set. In a fair data set, the number of individuals being treated is the same as the number of individuals not being treated. Therefore the average causal effect can be represented as the following:
\begin{equation}\label{averageCausalEffectFair}
E[\delta_i]_{D_f} \!=\! \frac{1}{2} (E[Z^1_i \mid T_i\!=\!1,\! D_f] - E[Z^0_i \mid T_i\!=\!1, \!D_f]) + \frac{1}{2} (E[Z^1_i \mid T_i\!=\!0, \!D_f] - E[Z^0_i \mid T_i\!=\!0, \!D_f])
\end{equation}

In the above formula, based on equations (\ref{eqnCausalEffectF1}) and (\ref{eqnCausalEffectF2}), we substitute $E[Z^0_i \mid T_i=0, D_f]$ and $E[Z^1_i \mid T_i=1, D_f]$ for $E[Z^0_i \mid T_i=1, D_f]$ and $E[Z^1_i \mid T_i=0, D_f]$ respectively. As a result, the average causal effect in the fair data set is estimated as the following:
\begin{equation}\label{averageCausalEffectFair1}
E[\delta_i]_{D_f} = E[Z^1_i \mid T_i=1, D_f] - E[Z^0_i \mid T_i=0, D_f]
\end{equation}

\noindent where both outcomes are observable. So when there is no sample bias, we can remove the superscripts and subscripts and obtain the average causal effect of the samples (or a population) as the following:
\begin{equation}\label{averageCausalEffectFair2}
\Delta = E[Z \mid T=1, D_f] - E[Z \mid T=0, D_f]
\end{equation}


This formula suggests that following the potential outcome model, the causal effect is the difference of the outcomes in the treatment (exposure) group and the control (non-exposure) group in a fair data set. 
In our definition of a causal rule, we also use the difference of outcomes in different groups to identify causal rules, except that we use the odds ratio to represent the difference as a cohort study does instead of the above arithmetic difference. 
Therefore, the definition of a causal rule over a fair data set is correct, in the sense that it is consistent with the approach under the potential outcome framework.

\section{Algorithm}\label{sectionAlg}
In this section we present the algorithm (Algorithm~1) for causal rule mining (called CR-CS in the rest of this paper). The algorithm integrates association rule mining with causal relationship test based on cohort studies. In the following, we firstly discuss two anti-monotone properties for efficient generation of candidate causal rules,  and we then discuss the selection of control variables for building a fair data set. Finally, we introduce the details of detecting causal rules from the candidate causal rules.
\begin{algorithm}[h]
\label{alg-Naive}
\caption{Causal Rule mining with Cohort Study (CR-CS)}
Input: Data set $D$ with the response variable $Z$, the minimal local support $\delta$, the maximum length of rules $k_0$, and the minimum odds ratio $\alpha$. \\
Output: A set of causal rules
\begin{algorithmic}[1]
\STATE let causal rule set $R_C = \emptyset$
\STATE add 1-patterns to a prefix tree $T$ (see Section \ref{sectionCandidateRule}) as the 1st level nodes
\STATE count support of the 1st level nodes with and without response $z$
\STATE remove nodes whose local support is no more than $\delta$ $\slash\slash$ Support pruning
\STATE Let $X$ be the set of attributes containing frequent 1-patterns
\STATE find the set of irrelevant attributes $I$
\STATE let $k=1$
\WHILE {$k \le k_0$}
\STATE generate association rules at the $k$-th level of $T$
\FOR {each generated rule $r_i$}
\STATE find exclusive variables $E$ of $LHS(r_i)$
\STATE let control variable set $C = X \backslash (I, E, LHS(r_i))$
\STATE create a fair data set for $r_i$ $\slash\slash$ Function 1
\IF {$\od_{D_f}(r_i) > \alpha$}
\STATE move $r_i$ to $R_C$
\STATE remove $LHS(r_i)$ from the $k$-th level of $T$ $\slash\slash$ Observation 1
\ENDIF
\ENDFOR
\STATE $k=k+1$
\STATE generate $k$-th level nodes of $T$
\STATE count the support of the $k$-th level nodes with and without response $z$
\STATE remove nodes whose local support is no more than $\delta$ $\slash\slash$ Support pruning
\STATE remove nodes of patterns whose supports are the same as those of their sub-patterns respectively $\slash\slash$ Observation 2
\ENDWHILE
\STATE {output $R_C$}
\end{algorithmic}
\end{algorithm}
\subsection{Anti-monotone properties}
\label{sec_anti-monotone}
Anti-monotone properties are at the core for efficient association rule mining. For example a well known anti-monotone property is that a super set of an infrequent pattern is infrequent, and infrequent patterns are pruned before they are generated (called forward pruning). We firstly discuss the anti-monotone properties that we will apply to candidate causal rule pruning.

In the following discussions, we say that rule $px \to z$ is more specific than rule $p \to z$, or $p \to z$ is more general than $px \to z$. Furthermore, we use $\cov(p)$ to represent the set of records in $D$ containing value $p$, and we call $\cov(p)$ the covering set of $p$. A rule is \emph{redundant} if it is implied by one of its more general rules. 

\begin{observation}[Anti-monotone property 1]
\label{observation_1}
All more specific rules of a causal rule are redundant.
\end{observation}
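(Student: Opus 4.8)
The plan is to argue that if $p \to z$ is a causal rule and $px \to z$ is a more specific rule obtained by conjoining an extra predictor $x$ to the LHS, then $px \to z$ carries no additional causal information beyond $p \to z$, so it should be regarded as redundant. The key observation is that the causal status of $p \to z$ is established on its own fair data set $D_f$, where the control variable set $C$ contains all other predictor variables — in particular it contains $x$. Thus within $D_f$ the exposure and non-exposure groups are already matched on $x$: every matched pair has the same value of $x$ on both records. This means that restricting attention to the sub-population where $x=1$ does not disturb the matching; the pairs with $x=1$ on both sides still form a valid (smaller) fair-data-set structure, and the pairs with $x=0$ on both sides likewise. So the causal effect of $P$ on $Z$ is the same whether or not we additionally condition on $x$, and $px \to z$ adds nothing.

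First I would make the notion of "implied by a more general rule" precise in the sense the paper uses informally just before the statement: a rule is redundant if, once its more general counterpart is known to be causal, the more specific one conveys no new relationship. Then I would take a causal rule $p\to z$ with fair data set $D_f$ built using control set $C = X \setminus (I, E, \{P\})$, note that any additional variable $x$ appearing in the LHS of a more specific rule $px\to z$ necessarily lies in $C$ (it is neither $P$ itself, nor — generically — irrelevant or exclusive), and hence the matched pairs in $D_f$ all agree on $x$. From this I would conclude that $\od_{D_f}(p\to z)>\alpha$ already certifies that the presence of $P$ (regardless of whether $x$ also holds) causes $Z$; tacking $x$ onto the condition merely narrows the cohort without changing the comparison being made. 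Therefore $px\to z$ is implied by $p\to z$ and is redundant.

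The main obstacle I anticipate is that the statement as phrased — "all more specific rules of a causal rule are redundant" — is really a modelling convention rather than a hard theorem, so the "proof" has to lean on the intended semantics of redundancy and on the bookkeeping detail that the extra LHS attribute $x$ is in the control set of the more general rule. The delicate case is when $x$ would have been removed from $C$ as an \emph{irrelevant} attribute ($I$) or an \emph{exclusive/exclusive-with-$P$} attribute ($E$); there one has to argue separately that such an $x$ cannot meaningfully extend the causal rule either (an irrelevant $x$ does not interact with $Z$, and an exclusive $x$ makes $px$ have support equal to a sub-pattern or to $p$ itself, so $px\to z$ collapses onto an already-considered rule). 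I would handle these corner cases explicitly and otherwise keep the argument at the level of the informal "implication" notion the paper has already adopted, rather than attempting a fully formal independence-style proof.
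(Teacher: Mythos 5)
Your conclusion is right, but your route differs from the paper's, and one step of it quietly assumes the very thing the paper makes explicit. The paper's proof is a one-paragraph appeal to a \emph{persistence} property of genuine causal relationships: a real causal relationship holds under any additional condition, so conjoining $x$ to the LHS of a causal rule $p \to z$ cannot change the relationship, and $px \to z$ is therefore implied by $p \to z$ and redundant. You instead argue mechanically from the construction of the fair data set: since $x$ lies in the control set $C$ for $p \to z$, every matched pair in $D_f$ agrees on $x$, so stratifying by $x$ preserves the matching, and hence conditioning on $x$ ``does not change the comparison.'' The bookkeeping observation is correct and is a nice complement to the paper's argument, but the inference from it --- that the causal effect of $P$ on $Z$ is the same with or without conditioning on $x$ --- does not follow from the matching alone. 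The matched pairs agreeing on $x$ guarantees only that $D_f$ decomposes into valid $x=1$ and $x=0$ sub-fair-data-sets; the odds ratio of $p\to z$ exceeding the threshold on all of $D_f$ does not force it to exceed the threshold on the $x=1$ stratum (effect modification is exactly the situation where it would not). Ruling that out is precisely what the paper's persistence assumption does, so your proof needs to invoke it (or an equivalent effect-homogeneity assumption) rather than derive it. Your handling of the corner cases where $x$ is irrelevant or exclusive is sensible and goes beyond what the paper states, and your framing of the observation as a modelling convention matches the paper's actual level of rigour.
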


\begin{proof}
This observation is based on the persistence property of a real causal relationship. Persistence means that a causal relationship holds in any condition. This implies that when a rule is specified, although additional conditions are added to the LHS of the rule, the conditions do not change the causal relationship. Therefore for the purpose of discovering causal rules/relationships, more specific candidate causal rules are implied by the general rule, and hence are redundant.
\end{proof}

For example, if rule ``college graduate $\to$ high salary'' holds, then we know that both male college graduates and female college graduates enjoy high salaries. It is therefore redundant to have the rules ``male college graduate $\to$ high salary'' and ``female college graduate $\to$ high salary''.

\begin{observation}[Anti-monotone property 2]
\label{observation_2}
If $\supp(px)=\supp(p)$, rule $px \to z$ and all more specific rules of $px \to z$ are redundant.
\end{observation}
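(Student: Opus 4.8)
The plan is to reduce this statement to Observation~1 (Anti-monotone property~1) by showing that under the hypothesis $\supp(px)=\supp(p)$, the rule $px \to z$ carries exactly the same causal content as its more general rule $p \to z$, and is therefore redundant by definition; the ``all more specific rules'' part then follows because the more specific rules of $px \to z$ are also more specific rules of $p \to z$.

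First I would observe that $\supp(px)=\supp(p)$ is equivalent to $\cov(px)=\cov(p)$, since $\cov(px)\subseteq\cov(p)$ always holds (every record containing $px$ contains $p$) and the two sets have equal cardinality exactly when the support counts agree. In plain terms, in the data set $D$ every record that has value $p$ also has value $x$: the extra condition $x$ is vacuous on the covering set of $p$. Consequently the contingency table of $px \to z$ (the counts $\supp(pxz)$, $\supp(px\neg z)$, and the complementary cells) is identical to that of $p \to z$, so in particular $\od_{\textsc{d}}(px \to z)=\od_{\textsc{d}}(p \to z)$ and the two rules have the same support; moreover, since the extra control variables and fair-data-set construction are driven by the covering set, the cohort-study test yields the same verdict. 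Hence $px \to z$ holds as an association rule (resp.\ causal rule) if and only if $p \to z$ does, and $px \to z$ is implied by its more general rule $p \to z$ — i.e.\ it is redundant in the sense defined just before Observation~1.

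Next I would handle the more specific rules of $px \to z$. Let $pxy \to z$ be any such rule. It is also a more specific rule of $p \to z$, so if $p \to z$ is a causal rule, Observation~1 already tells us $pxy \to z$ is redundant; and if $p \to z$ is not a causal rule, then by the equivalence just established neither is $px \to z$, and the claim about specific rules is about the case where $px \to z$ has been identified, which does not arise. Either way, once $\supp(px)=\supp(p)$, the rule $px \to z$ contributes nothing beyond $p \to z$, and its whole subtree of more specific rules is subsumed by the treatment of $p \to z$. This is precisely why the algorithm can prune, at line~23, any node whose support equals that of a sub-pattern: its entire descendant subtree is redundant.

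The main obstacle is making precise the sense of ``redundant'' and ``implied'' so the argument is not merely a restatement: the key lemma is the set equality $\cov(px)=\cov(p)$ and the resulting identity of contingency tables and fair data sets, which transfers every relevant property (association, odds ratio, causal-rule status) verbatim from $p \to z$ to $px \to z$. Everything else is bookkeeping, invoking Observation~1 for the more-specific part. One point to state carefully is that the fair data set for $px \to z$ and for $p \to z$ coincide: the exposure group is drawn from $\cov(px)=\cov(p)$, the non-exposure group from the complement, and the control variable set differs only in whether $x$ is listed among the controls — but since $x$ is constant on the exposure side it plays no role in matching, so the same matched pairs (hence the same counts $n_{12}, n_{21}$) are obtained.
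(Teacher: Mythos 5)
Your treatment of the rule $px \to z$ itself is essentially the paper's argument: from $\supp(px)=\supp(p)$ you get $\cov(px)=\cov(p)$, identical contingency tables and (as the paper also asserts) the same fair data set, so the causal status of $px \to z$ is fully determined by that of $p \to z$ and the rule is redundant. That part is fine.

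The gap is in how you handle the more specific rules $pxy \to z$. You route this through Observation~1, which only applies when the more general rule is \emph{causal}; for the case where $p \to z$ is not a causal rule you claim the situation ``does not arise,'' but it does: Observation~2 is an unconditional pruning criterion (used at line~23 of Algorithm~1 to delete the entire subtree under $px$ \emph{before} any causal test), so you must show that $pxy \to z$ is redundant even when neither $p \to z$ nor $px \to z$ is causal — otherwise $pxy \to z$ could in principle be a causal rule that the pruning wrongly discards. The paper closes this with one additional step that your argument is missing: $\supp(px)=\supp(p)$ implies $\cov(pxy)=\cov(px)\cap\cov(y)=\cov(p)\cap\cov(y)=\cov(py)$, i.e.\ $\supp(pxy)=\supp(py)$, so the covering-set argument you already gave for the pair $(px \to z,\ p \to z)$ applies verbatim to the pair $(pxy \to z,\ py \to z)$, making $pxy \to z$ redundant with respect to its more general rule $py \to z$ unconditionally. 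Your key lemma $\cov(px)=\cov(p)$ is exactly the right starting point; you just need to propagate it to the descendants rather than fall back on Observation~1. (A minor further remark: your side-claim that the fair data sets of $p \to z$ and $px \to z$ yield the same matched pairs because ``$x$ is constant on the exposure side'' is not airtight, since for $p \to z$ the variable $X$ may sit in the control set and constrain the non-exposure partners to have $X=1$, whereas for $px \to z$ it does not; the paper glosses over the same point, so I do not count it against you.)
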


\begin{proof}
If $\supp(px)=\supp(p)$, then $\cov(px) = cov(p)$. In other words, both $p \to z$ and $px \to z$ cover the same set of records. There will be the same fair data set for both rules. Therefore, if $p \to z$ is a causal rule, so is $px \to z$. If $p \to z$ is not a causal rule, nor is $px \to z$.  Hence rule $px \to z$ is redundant.

Let rule $pxy \to z$ be a more specific rule of rule $px \to z$. If $\supp(px) = \supp(p)$, then $\supp(pxy) = \supp(py)$. Using the same reasoning above, we conclude that rule $pxy \to z$ is redundant with respect to rule $px \to z$.
\end{proof}

Since there are two anti-monotone properties in addition to the anti-monotone property of support, it is efficient to use a level wise algorithm like Apriori \cite{Apriori}. Both anti-monotone properties \ref{observation_1} and \ref{observation_2} can be used in the same way as the anti-monotone property of support.

\subsection{Control variables}\label{sectionControlVariable}

The set of control variables determines the size of a fair data set. If the control variable set is large, the chance of finding a non-empty fair data set is small. Therefore we need to find a proper control variable set, without compromising the quality of the causal discovery. In the following we discuss how to obtain such a control variable set.

Let $X$ represent the set of all predictor variables, and as before $P$ is the exposure variable and $C$ is a set of control variables. Initially, let $C=X \backslash P$.

\begin{definition}[Relevant and irrelevant variables]
If a variable is associated with the response variable, it is relevant. Otherwise, it is irrelevant.
\end{definition}

We do not control irrelevant variables, hence $C=X \backslash (P, I)$ where $I$ stands for a set of irrelevant variables.

The major purpose for controlling is to eliminate the effects of other possible causal factors on the response variable. Other variables that are random with respect to the value of the response variable can be considered as noises and need not to be controlled. With Example~\ref{ex_simpsionParadox}, when we test the association rule ``Gender = $m$'' $\to$ ``Salary = $low$'' for finding a causal relationship, we should control variables like education, location, profession and working experience. However, we do not control variables like blood type and eye colour, since they are irrelevant to salary.

The combination of multiple irrelevant variables can be relevant. However, we do not consider combined variables in the control variable set. There will be many combined relevant variables and the support of combined variables are normally small. Therefore when they are included in the control variable set, it is very likely to have empty exposure or non-exposure groups.

\begin{definition}[Exclusive variables]
\label{def_exclusive}
Variables $P$ and $Q$ are mutually exclusive if $\supp(pq) \le \epsilon$ or $\supp(\neg p q) \le \epsilon$ where $\epsilon$ is a small integer.
\end{definition}

We do not control an exclusive variable of the exposure variable $P$, i.e. we let $C=X \backslash (P, I, Q)$ where $Q$ stands for a set of exclusive variables of $P$. Because if an exclusive variable is controlled, the exposure group or the non-exposure group may be empty, thus we are unable to do a cohort study. Let us take $\epsilon = 0$ as an example. When $\supp(pq) = 0$, we will have samples with $(P=1, Q=0)$, $(P=0, Q=1)$ and $(P=0, Q=0)$, but not $(P=1, Q=1)$. In this case, for a record in the non-exposure group with $(P=0, Q=1)$, no match can be found in the exposure group with $(P=1, Q=1)$. When $\supp(\neg p q) = 0$, we will have samples with $(P=1, Q=1)$, $(P=1, Q=0)$ and $(P=0, Q=0)$, but not $(P=0, Q=1)$, then for a record in the exposure group with $(P=1, Q=1)$, it is impossible to find a match with $(P=0, Q=1)$.

A main type of exclusive variables are those caused by database constraints. For example, $P$ represents the highest qualification being high school and $Q$ represents the highest qualification being university degree. As they both belong to the same domain in a relational data set, and an individual has only one highest qualification, $P$ and $Q$ are mutually exclusive ($\supp(pq) = 0$). In this case, it is not necessary to control $Q$ as it does not affect the finding about whether $P$ is a cause of the response variable.

Another type of exclusive variables are redundant attributes. Let us assume that two variables have the identical values but different names. e.g. $P$ and $Q$. They are mutually exclusive since $\supp(\neg p q) \le \epsilon$. We do not need to test both separately to see if they are causes of the response variable since one test is enough. However, if we include $Q$ in the control variable set, we will not be able to test $P$ since the fair data set is empty.

Exclusive variables can be confounding variables, for example $P=thunder$ and $Q=storm$ may be mutually exclusive in a data set since $\supp(\neg p q) \le \epsilon$. Let us assume that they jointly cause the response. If we control $Q$,  $P$ will not be tested as a cause. When we remove $Q$ from the control variable set, we will be able to find $P$ as a cause. It is not difficult to find out that $Q$ is a confounder of $P$ in post processing since they are strongly associated.

\subsection{Candidate causal rule generation}\label{sectionCandidateRule}

This algorithm makes use of branch and bound search similar to Apriori \cite{Apriori} for association rule mining. The algorithm employs support pruning plus the two pruning criteria (Observations 1 and 2) presented in Section~\ref{sec_anti-monotone}, and therefore searches much smaller search space than Apriori. The algorithm is based on a prefix tree structure for candidate generation, storage and counting. The prefix tree structure has been shown to support efficient implementation for branch and bound search \cite{Borgelt03Apriori}.

A prefix tree is an ordered tree to store ordered sets (see Figure \ref{prefixtree} for an example). In our algorithm, each node stores a set of nonzero variable values (or a potential LHS of a rule). We assume that nonzero variable values are coded and ordered, and this is to prevent generating duplicate candidate causal rules. A node stores the prefix set of the sets stored in its child nodes, and a child node is labeled by the different value between its stored set and the set of its parent. The root of the prefix tree stores an empty set.
\begin{figure}
\center
\includegraphics[width=0.70\textwidth]{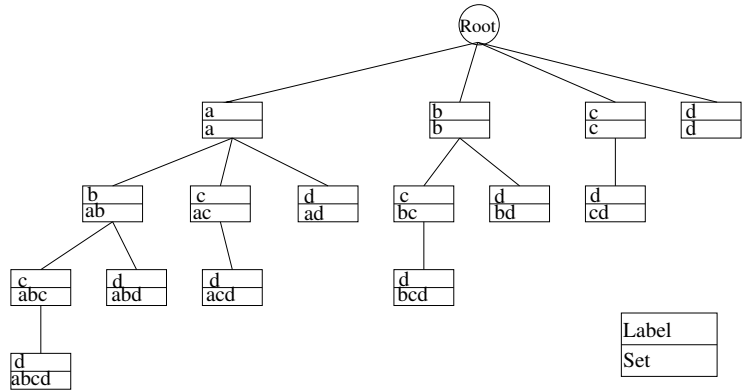}
\caption{An example prefix tree. $a, b, c, d$ stand for nonzero values of variables $A, B, C$ and $D$.}
\label{prefixtree}
\end{figure}

The set of labels along the path from the root to a node is the set of values stored in the node. This index makes the search, insert, and delete a node easy. This property also makes the counting of supports of value sets efficient too. If a set stored in a parent node is not in a record, all sets stored in its child nodes will not be in the record too. This property avoids many unnecessary search in the counting process. In the counting process, each node store two counts, for example, $\supp(pz)$ and $\supp(p \neg z)$. As a result, the contingency table of $p \to z$ is determined.

All super sets with the same prefix stored under a parent node storing the prefix. When the parent node is removed, so are all super sets. This suits the forward pruning for candidate causal rule generation very well. For more efficient pruning, the backtrack links to other parent nodes in the prefix tree are also added. For example, in Figure \ref{prefixtree} node $abc$ links to node $bc$, and such link will facilitate the pruning by Anti-monotone property 2.

Referring to Algorithm 1, the code involved in the candidate generation includes Lines 2-4, 9, and 20-23. They are self-explanatory based on the discussions in this and the previous subsection.

\subsection{Causal rule detection}\label{sectionCRdetection}
This process involves three steps, as discussed below.

\subsubsection{Determining control variables}
We firstly determine the set of irrelevant variables, each of which is not associated with the response variable. For a variable $Y$, its association with the response variable $Z$ can be determined by the odds ratio of $y \to z$.
The above identified irrelevant variables are excluded from the control variable set. These are implemented by Lines 6 and 12.

Secondly we identify the exclusive variables of an exposure variable, say $P$, according to Definition~\ref{def_exclusive} where $\epsilon$ is set to the same value as the minimum local support.  We exclude the identified exclusive variables from the control variable set.  
These are implemented by Lines 11 and 12.

The remaining variables then form the control variable set. The control variable set can be viewed as a set of patterns in association rule mining. For example, if male, female, college and postgraduate form the control variable set, the set includes the patterns \{(male, college), (male, postgraduate), (female, college), (female, postgraduate)\}.

\subsubsection{Creating fair data set}
\label{sec_support}
We select the samples from the given data set $D$ to get the fair data set for rule $p \to z$, following the procedure listed in Function~1. We firstly find the covering set of $c$. Then the covering set of $c$ is split into two subsets: one containing value $p$, denoted by $D_{cp}$, and the other containing value $\neg p$ (or $P=0$), denoted by $D_{c\neg p}$. Assume that $|D_{cp}| \le |D_{c\neg p}|$ (if not, we swap the order in the following description). For each record in $D_{cp}$, find a matched record in $D_{c\neg p}$ with respect to the control variable set. We have implemented exact matching and the matching using Jaccard distance. If there are more than one matched records, choose one randomly. Add the pair of records to the fair data set. If there is no matched record in $D_{c\neg p}$, move to the next record. 

\begin{algorithm}[tb]
\textbf{
Function 1 Create a fair data set for rule $p \to z$}\\
Input: Data set $D$, rule $p \to z$, and control variable set $C$\\
Output: a fair data set for rule $p \to z$, $D_f$
\begin{algorithmic}[1]
\STATE find the covering set of $c (C=1)$, $D_c$
\STATE split $D_c$ into $D_{cp}$ and $D_{c\neg p}$ $\slash\slash$ $D_{cp}$ contains value $p$ and $D_{c\neg p}$ does not
\STATE let $D_f = \emptyset$
\FOR {each record $t_i$ in $D_{cp}$  $\slash\slash$ assuming $|D_{cp}| \le  |D_{c\neg p}|$. If not, swap $D_{cp}$ and $D_{c\neg p}$.}
\FOR {each record $t_j$ in $D_{c \neg p}$}
\IF {$t_i$ and $t_j$ are matched w.r.t the values of $C$}
\STATE move $t_i$ and $t_j$ to $D_f$
\ENDIF
\ENDFOR
\ENDFOR
\STATE {output $D_f$}
\end{algorithmic}
\label{alg_fairDataset}
\end{algorithm}


\subsubsection{Testing causal rules}
To check if an association rule $p \to z$ is a causal rule, we firstly follow Definition \ref{ORFairDataSet} to calculate the odds ratio of the rule on its fair data set created in the previous step. Then according to Definition \ref{DefCArule}, if the odds ratio is greater than the given minimum odds ratio, we can say that $p \to z$ is a causal rule. This has been implemented by Line 14 in Algorithm 1. Alternatively, we can use the method introduced in Section \ref{subsectionCausalRules} to test the significance of the odds ratio of the rule on its the fair data set. If the odds ratio is significantly higher than 1 for a given confidence level, then we conclude that $P$ is a cause of $Z$.

\section{Experiments}

\subsection{Data sets and parameters}\label{subsec_Datasets}

To evaluate CR-CS, the proposed causal rule mining algorithm, twenty four synthetic data sets and {  eight} frequently used public data sets were employed in the experiments. A summary of the data sets is given in Table \ref{tab_description}. The number of variables in the table refers to the number of predictor variables in a data set. All predictor variables and the response variable are binary variables, with values of 1 or 0 indicating the presence or absence of an attribute correspondingly. The class variable in each of the eight public data sets is set as the response variable in our experiments. The distributions refer to the percentages of the two different values of response variables in the data sets. For the synthetic data sets, the ground truth column represents the number of true single causes and known combined causes each consisting of two predictor variables.
\begin{table}
\begin{center}
\caption{A summary of data sets used in experiments}
\label{tab_description}
\begin{tabular}{|c|c|c|c|c|}
\hline Name & \#Records & \#Variables & Distributions &  Ground Truth \\
\hline
{V20-2K } & {2000} & {19} & {41.9\% \& 58.1\%} & {7} \\
{V20-5K } & { 5000} & { 19} & { 41.9\% \& 58.1\%} & { 7} \\
{ V20-10K } & { 10000} & { 19} & { 41.9\% \& 58.1\%} & { 7} \\
{ V40-2K } & { 2000} & { 39} & { 37.6\% \& 62.4\%} & { 7} \\
{ V40-5K } & { 5000} & { 39} & { 37.6\% \& 62.4\%} & { 7} \\
{ V40-10K } & { 10000} & { 39} & { 37.6\% \& 62.4\%} & { 7} \\
{ V60-2K } & { 2000} & { 59} & { 52.5\% \& 47.5\%} & { 7} \\
{ V60-5K } & { 5000} & { 59} & { 52.5\% \& 47.5\%} & { 7} \\
{ V60-10K } & { 10000} & { 59} & { 52.5\% \& 47.5\%} & { 7} \\
{ V80-2K } & { 2000} & { 79} & { 50.6\% \& 49.4\%} & { 8} \\
{ V80-5K } & { 5000} & { 79} & { 50.6\% \& 49.4\%} & { 8} \\
{ V80-10K } & { 10000} & { 79} & { 50.6\% \& 49.4\%} & { 8} \\
{ V100-2K } & { 2000} & { 99} & { 48.1\% \& 51.9\%} & { 6} \\
{ V100-5K } & { 5000} & { 99} & { 48.1\% \& 51.9\%} & { 6} \\
{ V100-10K } & { 10000} & { 99} & { 48.1\% \& 51.9\%} & { 6} \\
\hline
{ V200-10K} & { 10000} & { 199} & { 19.8\% \& 80.2\%} & { 20} \\
{ V400-10K} & { 10000} & { 399} & { 19.8\% \& 80.2\%} & { 40}\\
{ V600-10K} & { 10000} & { 599} & { 19.8\% \& 80.2\%} & { 60}\\
{ V800-10K} & { 10000} & { 799} & { 19.8\% \& 80.2\%} & { 80}\\
{ V1000-10K} & { 10000} & { 999} & { 19.8\% \& 80.2\%} & { 100}\\
\hline \hline  Name &  \#Records &  \#Variables &  Distributions &   Known combined rules \\
\hline
{ V8-2K} & { 2000} & { 7} & { 45.1\% \& 54.9\%} & { 2}\\
{ V12-2K} & { 2000} & { 11} & { 72.1\% \& 27.9\%} & { 3}\\
{ V16-2K} & { 2000} & { 15} & { 45.2\% \& 54.8\%} & { 3}\\
{ V20-2K-cmb} & { 2000} & { 19} & { 55.6\% \& 44.4\%} & { 4}\\
\hline \hline
{ German} & { 1000} & { 60} & { 30.0\% \& 70.0\%} & { -}\\
{ Kr-vs-kp} & { 3196} & { 74} & { 47.8\% \& 52.2\%} & { -}\\
{ Mushroom} & { 8124} & { 215} & { 48.2\% \& 51.8\%} & { -}\\
{ Tic-tac} & { 958} & { 27} & { 34.7\% \& 65.3\%} & { -}\\
Adult & 48842 & 99 & 23.9\% \& 76.1\% & { -}\\
Hypothyroid & 3163 & 51 & 4.8\% \& 95.2\% & { -}\\
Sick & 2800 & 58 & 6.1\% \& 93.9\% & { -}\\
Census income & 299285 & 495 & 6.2\% \& 93.8\% & { -}\\
\hline
\end{tabular}
\end{center}
\end{table}

{The first fifteen synthetic data sets in Table \ref{tab_description} were used to evaluate the performance of CR-CS in finding single causal rules in comparison with the Bayesian network based methods, PC-Select, CCC, and CCU. Those synthetic data sets  of random Bayesian networks  were generated using the TETRAD software (http://www.phil.cmu.edu/tetrad/). In TETRAD, we firstly generate randomly the structure of the BN  using the ``simulate data from IM" template. The conditional probability table was also randomly assigned, which will be used to simulate the data. The data sets were then generated  using the built-in Bayes Instantiated Model (Bayes IM). In the Bayes IM, the data of each binary variable was randomly generated so that the distributions of all the variables satisfy the  constraints in the conditional probability tables. We selected a node in each of the BNs as the fixed target for running the algorithms.

The next five synthetic data sets (V200-10K, ..., V1000-10K) were used to assess the efficiency of the algorithms. To generate those large data sets with a fixed propotion of nodes being the parents of the target node (which is not practical with TETRAD), we firstly draw simple BNs where some predictor variables are parents of the response variable, and some are not. We then use logistic regression to simulate the data sets for those BNs. The total number of causes in each BN is given in Table \ref{tab_description}.

Meanwhile the four data sets, V8-2K, V12-2K, V16-2K, and V20-2K-cmb, are for assessing the ability of CR-CS to discover combined causes. These four synthetic data sets have been generated with the following procedure. We firstly generate a data set for a random BN using TETRAD and choose a node as the target. To create a known combined cause, we randomly select a parent variable, $X$, of the target in the generated BN to split it into two new variables, $X_a$, $X_b$. The new variables must satisfy two conditions: (1)$X=X_{a}\wedge X_{b}$ (i.e. $X=1$ if and only if $X_a=1$ and $X_b=1$), and (2) $X_a$ and $X_b$ are not associated with the response variable. The number of known combined causes are shown in Table \ref{tab_description}. Note that we do not have a complete ground truth of all combined causes, as there may be other combined causes in the data set due to the combinations of non-causal single variables. In the experiments, we investigate the performance of CR-CS in terms of the ability to recover known combined causes.}

Among real world data sets, Hypothyroid and Sick are two medical data sets and they were originally retrieved from the Thyroid Disease folder of the UCI Machine Learning Repository  \cite{Bache+Lichman:2013} and then discretised by using the MLC++ discretisation utility \cite{kohavi.ea:using-mlc:96}. The Adult data set is an extraction of the USA census database in 1994 and it was also retrieved from the same repository. In our experiments, all continuous attributes have been removed from the original Adult data set. These three data sets were used in the experiments for testing the effectiveness of CR-CS, in comparison with other methods (see Sections \ref{sectionCRvsAR} and \ref{sectionCRvsOther}). They were also used for evaluating the stability (Section \ref{sectionStability}) of CR-CS and the impact of different matching methods (Section \ref{sectionMatching}).

The Census Income (KDD) data set was also sourced from the UCI Machine Learning Repository. We combined the training and test data sets and then sampled 50K, 100K, 150K, 200K and 250K records for the experiments. Continuous attributes have been removed. The data set and the last five synthetic data sets (with 10K records) were used to assess the efficiency of CR-CS (Section \ref{sectionEfficiency}). {  Other real world data sets are also from UCI Machine Learning Repository and are used to investigate the number of combined causes discovered by CR-CS.}

In the experiments, while the default minimum local support ($\delta$ in Algorithm 1) was 0.05, we set it to 0.01 for the Adult data set in the comparison with the other three methods,  CCC \cite{Cooper:LCD1997}, CCU \cite{Silverstein:CCU2000} and PC-select \cite{pcalg}. {The confidence level was set to 99\% for calculating the confidence interval (lower bounds and upper bounds) of the odds ratio for synthetic data sets, and 95\% for real world data sets considering the noises in the real world data sets.}

\subsection{Causal rules vs. association rules}\label{sectionCRvsAR}
{ Causal rules have advantages over association rules. An association rule may represent a spurious relationship between two variables as a statistical association does not necessarily mean that the two variables are related or directly related (while a causal rule indicates that the two variables have a direct relationship given the observed variables). Those spurious association rules could not be removed by increasing thresholds. They can only be identified by analysing the relationship by shielding the effects of other variables.}

To investigate the difference between association rule mining and causal rule mining, we compared the results obtained by CR-CS with the results of various types of association rule mining. From Table \ref{tab_numberofrules}, the number of causal rules is significantly smaller than the numbers of other types of (association) rules, including association rules \cite{Apriori}, non-redundant rules \cite{Zaki_non_redundant04}, and optimal rules \cite{Li_On_optimal}. Associations are measured by the odds ratio defined in Definition \ref{def_oddsratio}, and their significance is tested using the method discussed in Section \ref{subsectionAR}, and all the methods used the same minimum local support. The maximum length of rules is 4.

\begin{table}
\begin{center}
\caption{Comparison of the numbers of association rules (AR), non-redundant rules (NRR), optimal rules (OR) and causal rules (CR). {  Many association and interesting rules are not causal.}}
\label{tab_numberofrules}
\begin{tabular}{|l|cccc|}
\hline
 & \#AR & \#NRR & \#OR & \#CR \\
\hline
Adult &  3108 & 2863 &  976 &  46  \\
Hypothyroid & 39476 & 17692 & 3237 &  30 \\
Sick & 56183 & 28698 & 3917 &  21 \\

\hline
\end{tabular}
\end{center}
\end{table}

The number of causal rules obtained from a data set is very small. They may not be enough for classification since not every record in the data is covered by a causal  rule. However, they are more reliable relationships since each causal rule is tested by the cohort study in data.

Most discovered causal rules (99\%) are short and include one or two variables, which makes it easy for these rules to be easily interpreted and applied to solve real world problems where only short rules are preferred.

\subsection{Causal rules vs. findings of other causal discovery methods}\label{sectionCRvsOther}

To evaluate the performance of CR-CS, we conducted a set of experiments with the first { 15} synthetic data sets and { 3 real world data sets, Adults, Hypothyroid and Sick,} and compared the performance of CR-CS with the constraint based methods, CCC \cite{Cooper:LCD1997}, CCU \cite{Silverstein:CCU2000}, and PC-select \cite{pcalg}.

As mentioned in Section \ref{sectionRelatedWork}, CCC \cite{Cooper:LCD1997} and CCU \cite{Silverstein:CCU2000} are two efficient constraint based causal discovery methods. Both of them learn the simple structures involving three variables with certain dependence/independence relationships among them, and infer causal relationships from the structures. Both methods assume no hidden and no confounding variables in data sets. PC-select \cite{pcalg} is a local causal discovery method that finds all the parents and children of a given node.  It is similar to the well-known PC algorithm \cite{Spirtes2001book} for learning a Bayesian network, except that it only finds the local causal relationships around a given response variable. The PC algorithm can return optimal result

In the experiments, CCC and CCU were restricted to identify the structures involving the response variables only. When a statistical significance test was involved, 95\% confidence level is used. With our method (CR-CS), since there are small variations in the causal rules discovered in different runs due to random selection of matched pairs when a record has multiple matches, in the experiments, with one data set, we generated causal rules (i.e. ran the algorithm) three times and chose the rules occurring at least twice in the three runs.

\subsubsection{Experiment results of synthetic data}
{  Table \ref{tab_numberCAR-CCC-CCU_syn} shows the precision ($P$), recall ($R$), and $F_1$-measure ($F_1$) of the four methods for the 15 synthetic data sets with different number of variables and samples. As we can see from the table, PC-select and CR-CS are significantly better than CCC and CCU in precision, recall, and $F_1$ measure. CR-CS and PC-select achieve good results with more than 70\% in precision and $F_1$ measure for most of the synthetic data sets.

To investigate if a method performs better than the other, for each pair of methods,  we conduct the Wilcoxon test \cite{demvsar2006statistical} of the $F_1$-measures of the results obtained by the pair of methods with the fifteen data sets. Table \ref{tab_Wil} shows the pairwise test results for the four methods. Overall, PC-select and CR-CS are significantly better than CCC and CCU, but there is no evidence to conclude that CR-CS or PC-select is better than the other. However, note that PC-select is only suitable for  data sets with small number of nodes or sparse data sets with small  number of causes of the target.  It took more than two hours for PC-select to complete when it was applied to the synthetic data set with 100 nodes and 20 causes of the target, and it failed to return results for the  data set with 120 nodes with 26 causes of the target within 24 hours.}

\begin{table}[t]
\begin{center}
\small
  \caption{{Performance of CCC, CCU, PC-select, and CR-CS in finding single rules with synthetic data sets. $P$, $R$ and $F_1$ represent precision, recall and $F_1$-measure, respectively.}}
\label{tab_numberCAR-CCC-CCU_syn}
\scalebox{0.95}{
    \begin{tabular}{|c|ccc|ccc|ccc|ccc|}
    \hline
          & \multicolumn{3}{c|}{ CCC} & \multicolumn{3}{c|}{ CCU} & \multicolumn{3}{c|}{ PC-select} & \multicolumn{3}{c|}{ CR-CS} \\
          &  $P$     &  $R$     &  $F_1$    &  $P$     &  $R$     &  $F_1$    &  $P$     &  $R$     &  $F_1$    &  $P$     &  $R$     &  $F_1$ \\
    \hline
     V20-2K    &  0.75  &  0.86  &  0.80  &  1.00  &  0.57  &  0.73  &  0.83  &  0.71  &  0.77  &  1.00  &  0.57  &  0.73 \\
     V20-5K   &  0.63  &  1.00  &  0.78  &  0.50  &  0.43  &  0.46  &  1.00  &  1.00  &  1.00  &  0.86  &  0.86  &  0.86 \\
     V20-10K    &  0.55  &  0.86  &  0.67  &  0.40  &  0.29  &  0.33  &  1.00  &  0.86  &  0.92  &  1.00  &  0.86  &  0.92 \\
    \hline
     V40-2K    &  0.50  &  0.86  &  0.63  &  0.50  &  0.43  &  0.46  &  0.83  &  0.71  &  0.77  &  1.00  &  0.52  &  0.73 \\
     V40-5K    &  0.57  &  1.00  &  0.74  &  0.57  &  0.57  &  0.57  &  1.00  &  1.00  &  1.00  &  1.00  &  1.00  &  1.00 \\
     V40-10K    &  0.41  &  1.00  &  0.58  &  0.30  &  0.43  &  0.35  &  0.88  &  1.00  &  0.93  &  1.00  &  1.00  &  1.00 \\
    \hline
     V60-2K   &  0.27  &  0.57  &  0.36  &  0.00  &  0.00  &  0.00  &  0.80  &  0.57  &  0.67  &  1.00  &  0.57  &  0.73 \\
     V60-5K    &  0.38  &  0.86  &  0.52  &  0.40  &  0.29  &  0.33  &  0.86  &  0.86  &  0.86  &  1.00  &  0.86  &  0.92 \\
     V60-10K   &  0.30  &  0.86  &  0.44  &  0.33  &  0.57  &  0.42  &  0.86  &  0.86  &  0.86  &  0.83  &  0.71  &  0.77 \\
    \hline
     V80-2K   &  0.75  &  0.75  &  0.75  &  1.00  &  0.38  &  0.55  &  1.00  &  0.75  &  0.86  &  1.00  &  0.63  &  0.77 \\
     V80-5K  &  0.55  &  0.75  &  0.63  &  1.00  &  0.50  &  0.67  &  1.00  &  0.75  &  0.86  &  1.00  &  0.75  &  0.86 \\
     V80-10K  &  0.66  &  0.88  &  0.74  &  0.67  &  0.25  &  0.36  &  0.88  &  0.88  &  0.88  &  1.00  &  0.88  &  0.93 \\
    \hline
     V100-2K   &  0.43  &  1.00  &  0.60  &  0.25  &  0.33  &  0.29  &  0.75  &  1.00  &  0.86  &  0.80  &  0.67  &  0.73 \\
     V100-5K  &  0.29  &  0.83  &  0.44  &  0.17  &  0.17  &  0.17  &  0.63  &  0.83  &  0.71  &  0.80  &  0.67  &  0.73 \\
     V100-10K  &  0.35  &  1.00  &  0.52  &  0.57  &  0.67  &  0.62  &  1.00  &  0.83  &  0.91  &  0.71  &  0.83  &  0.77 \\
    \hline
    \end{tabular}%
}
\end{center}
\end{table}%

\begin{table}[h]
  \centering
  \caption{ Wilcoxon signed ranks test results for the four methods with $F_1$ measure listed in Table \ref{tab_numberCAR-CCC-CCU_syn}}
    \begin{tabular}{|c|cccc|}
    \hline
        { \emph{p-value}} &  CR-CS &  PC-select &  CCC   &  CCU \\
    \hline
         CR-CS &  -     &  0.769 &  \textbf{3.74E-04} &  \textbf{2.51E-06} \\
         PC-select &  0.244 &  -     &  \textbf{2.49E-05} &  \textbf{2.63E-06} \\
         CCC   &  1.000 &  1.000 &  -     &  \textbf{0.002} \\
         CCU   &  1.000 &  1.000 &  0.998 &  - \\
    \hline
    \end{tabular}%
  \label{tab_Wil}%
\end{table}%

{To evaluate the ability of CR-CS in recovering combined causal rules, we use synthetic data sets with known combined rules as described in section ~\ref{subsec_Datasets}. We applied CR-CS to the four data sets, V8-2K, V12-2K, V16-2K, and V20-2K-cmb to discover level 2 rules with the 99\% confidence level. The experiment results have shown that CR-CS can recover all known combined rules, including 2 rules in V8-2K, 3 rules in V12-2K, 3 rules in V16-2K, and 4 rules in V20-2K. There are also 5, 3, 16, and 15 extra combined rules discovered by CR-CS in the four data sets, respectively. We do not have  a means to test if extras are real combined causes. However, the results show that the method is able to uncover known combined causes.}

\subsubsection{Experiment results of real world data}

With the Adult data set, as shown in Table \ref{tab_numberCAR-CCC-CCU}, CR-CS, CCC and CCU discovered similar number of rules, while PC-select found a relatively small number of rules.
\begin{table}[h]
\begin{center}
\caption{Number of causal rules/relationships discovered by CR-CS, CCC, CCU and PC-select with real world data sets}
\label{tab_numberCAR-CCC-CCU}
\begin{tabular}{|l|cccc|}
\hline
 & CR-CS & CCC & CCU &PC-select  \\
\hline
Adult &   46 & 53 &  46 &  19\\
Hypothyroid &  30 & 14 & 10 &4 \\
Sick &  21 & 13 & 3 &5 \\
\hline
\end{tabular}
\end{center}
\end{table}

\begin{table}[h]
\begin{center}
\caption{The similar and dissimilar causal rule groups discovered by CR-CS and the other methods in the Adult data set. {(some-college: Some college but no degree;  exec-managerial: Executive admin and managerial; prof-specialty: Professional specialty; handlers-cleaners: Handlers equip cleaners etc.; machine-op-inspct: Machine operators assemblers \& inspectors; adm-clerical: Admin support including clerical; other-service: Other services; farming-fishing: Farming forestry and fishing. sel-emp-inc: Self-employed-incorporated; sel-emp-not-inc: Self-employed-not incorporated.)}}
\label{tab_causalRuleCensus}
\begin{tabular}{|l|cccc|}
\hline
Causal rules & CR-CS & CCC & CCU &PC-select \\
\hline
Education=doctorate $\to$ $>50$K & $\surd$ & $\surd$ & $\surd$ & $\surd$\\
Education=masters  $\to$ $>50$K & $\surd$ & $\surd$ & $\surd$ & $\surd$\\
Education=bachelors  $\to$ $>50$K & $\surd$ & $\surd$ & $\surd$ & $\surd$\\
Education=prof-School  $\to$ $>50$K & $\surd$ & $\surd$ & $\surd$ & $\surd$\\
Education=some-college $\to$ $\le50$K & & $\surd$ & $\surd$ &\\
Education=HS-grad $\to$ $\le50$K & $\surd$ & $\surd$ & $\surd$ &\\
Education=12th $\to$ $\le50$K & $\surd$ & $\surd$ & $\surd$ &\\
Education=11th $\to$ $\le50$K & $\surd$ & $\surd$ & $\surd$ & $\surd$\\
Education=10th $\to$ $\le50$K & $\surd$ & $\surd$ & $\surd$ & $\surd$\\
Education=9th $\to$ $\le50$K & $\surd$ & $\surd$ & $\surd$ & $\surd$\\
Education=7-8th $\to$ $\le50$K & $\surd$ & $\surd$ & $\surd$ & $\surd$\\
Education=5-6th $\to$ $\le50$K & $\surd$ & $\surd$ & $\surd$ &\\
Education=1-4th $\to$ $\le50$K &  & $\surd$ & $\surd$ &\\
Education=preschool $\to$ $\le50$K &  & $\surd$ &  &\\
\hline
Occupation=exec-managerial $\to$ $>50$K & $\surd$ &  &  & $\surd$\\
Occupation=prof-specialty $\to$ $>50$K & $\surd$ &  &  &\\
Occupation=tech-support $\to$ $>50$K & $\surd$ & $\surd$ & $\surd$ &\\
Occupation=sales $\to$ $>50$K & $\surd$ &  &  &\\
Occupation=handlers-cleaners $\to$ $\le50$K & $\surd$ &  &  & $\surd$\\
Occupation=machine-op-inspct $\to$ $\le50$K & $\surd$ &  & & \\
Occupation=adm-clerical $\to$ $\le50$K & $\surd$ &  &  &\\
Occupation=other-service $\to$ $\le50$K & $\surd$ &  &  & $\surd$\\
Occupation=farming-fishing $\to$ $\le50$K & $\surd$ &  & & $\surd$ \\
Occupation=transport-moving $\to$ $\le50$K & $\surd$ &  & & \\
Occupation=craft-repair $\to$ $\le50$K & $\surd$ &  &  &\\
\hline
Workclass=sel-emp-inc $\to$ $>50$K & $\surd$ & $\surd$ &  & $\surd$\\
Workclass=sel-emp-not-inc $\to$ $>50$K & & $\surd$ & $\surd$ &\\
Workclass=federal-gov $\to$ $>50$K & $\surd$ & $\surd$ & $\surd$ & $\surd$\\
Workclass=state-gov $\to$ $>50$K & $\surd$ &  &  &\\
Workclass=local-gov $\to$ $>50$K & $\surd$ & $\surd$ & $\surd$ &\\
Workclass=private $\to$ $\le50$K & & $\surd$ & $\surd$ &\\
\hline
Native Country=USA $>50$K & $\surd$ & $\surd$ & $\surd$ &\\
Native Country=various countries &  1 & 22 & 17 & 2\\
\hline 
 Education=Some-college  &  &  &  & \\
 \& Workclass=Private $\to$ $\le 50$K &  $\surd$ &  &  & \\
\hline
\end{tabular}
\end{center}
\end{table}

When we look into the rules discovered by these methods, they are quite different. We list in Table \ref{tab_causalRuleCensus} the most similar and dissimilar rule groups found in the Adult data set using CR-CS and the other methods. We can see that overall CR-CS and PC-select obtained similar results, while only for the variables related to the Education attribute, rules discovered by CR-CS and PC-select are similar to those discovered by CCC and CCU.

Intuitively, Education is the major factor affecting incomes. We see that people with higher education have a better chance for a high salary, such as, doctorate, masters, bachelors, and professional school (prof-School). In contrast, people with lower education more likely receive a low salary, for example some college but no degree and lower.

Rules discovered by CR-CS and PC-select are dissimilar to those found by CCC and CCU in relation to the Occupation, Workclass and Native-country attributes. There are 11 rules discovered by CR-CS with respect to the Occupation attributes, but only one rule is discovered by CCC and CCU in this group. CCC and CCU have missed some very reasonable causal factors for high/low salary. For example, ``exec-managerial" and ``prof-specialty'' for high salary, and ``handlers-cleansers'' and ``adm-clerical'' for low salary are reasonable causal rules, but they have been missed by CCC and CCU. PC-select, although found fewer number of rules in this group (Occupation), the rules found by it are reasonable. On the other hand, 22 rules related to the Native Country attributes are discovered by CCC, 17 rules by CCU, but only 1 rule by CR-CS in this group. PC-select found two rules in this group, again performing more consistently with CR-CS. Intuitively, Native Country should not a factor for high/low salary. This shows that CR-CS is able to discover more reasonable causal rules.

{ 
The combined causal rule discovered by CR-CS is also reasonable. As shown in Table \ref{tab_causalRuleCensus}, people with some-college education but without any degree and working in a private sector would have low salaries. CR-CS did not discover that people with some-college or with private work-class would have low income at the single rule level as found by CCC and CCU, but it provides more details with the combined causal rule.

To investigate the number of combined causal rules in real world cases, we run CR-CS for eight real world data sets with up to level 4 rules. Table \ref{tab_number}  shows the number of single and combined causal rules discovered by CR-CS with the 95\% confidence level. We can see that the combined causal rules at level 3 and 4 are rare, but CR-CS found a number of combined causal rules at the second level. Although we do not have a ground truth to validate all of the combined rules, some rules are reasonable based on common knowledge. For example, with the Mushroom data set, we can see from Table ~\ref{tab_CombinedEx} that poisonous mushrooms are pink and have either  evanescent ring type or white spore print. Our common understanding is that poisonous mushrooms are normally in bright color, but not all brightly colored mushrooms are poisonous. These combined causal rules provide more detail on the poisonous mushrooms than just based on their colors, and therefore they are useful in practice. Similarly, CR-CS discovers that mushrooms without bright color and odor are edible, and these rules are also reasonable.}

\begin{table}[h]
  \centering
  \caption{  Number of combined causal rules discovered by CR-CS in real world data sets}
     \begin{tabular}{|c|cccc|}
    \hline
          &  1st Level &  2nd Level &  3rd Level &  4th Level \\
        \hline
     { Adult}     &  45    &  1     &  0     &  0 \\ \hline
     { Census income}     &  77    &  6     &  0     &  0 \\ \hline
     { Germand}     &  8    &  38     &  12     &  5 \\ \hline
     { Hypothyroid}     &  20    &  7     &  3     &  0 \\ \hline
     { Kr-vs-kp}     &  3    &  15     &  0     &  0 \\ \hline
     { Mushrom}     &  26    &  61     &  0     &  1 \\ \hline
     { Sick}     &  13    &  7     &  1     &  0 \\ \hline
     { Tic-tac}     &  8    &  30     &  3     &  0 \\ \hline
    \end{tabular}%
  \label{tab_number}%
\end{table}%

\begin{table}[h]
  \centering
  \caption{  Some combined causal rules in the Mushrooms data set.}
     \begin{tabular}{|c|}
    \hline
              Combined causal rules \\
        \hline
    {  Stalk-color-below-ring = pink \& Ring-type = evanescent  $\to$ poisoners  }\\ \hline
  {  Stalk-color-below-ring = pink \& Spore-print-color = white $\to$ poisoners }\\ \hline
    { Odor = none \& Stalk-shape = tapering $\to$ edible}\\ \hline
    {  Cap-color = gray \& Odor = none $\to$ edible}\\ \hline
    \end{tabular}%
  \label{tab_CombinedEx}%
\end{table}%

\subsection{Stability}\label{sectionStability} \label{sec_samplebiase}

The creation of a fair data set is subject to selection bias. Usually there are significantly more exposed cases than non-exposed cases so the data distribution is often skewed for the exposure and non-exposure conditions.  When we choose pairs of matched records to form a fair data set, we pick up one record from the exposure group and find a matched record from the non-exposure group. In this process, the values of the response variable are blinded. When there are more than one matched record to choose from, we randomly choose one. It is possible that the value distribution of the response variable in a fair data set is affected by the random selection. This will cause misses or false discoveries of causal rules. This situation is the same as the real world sample process, which is subjected to sampling bias.

To reduce the impact of selection bias, we run the method on a data set multiple times and select consistent rules in multiple causal rule sets as the final causal rules. The variance is not big and the causal discovery is quite stable. The numbers of causal rules from different runs and the rules supported by two causal rule sets are listed in Table \ref{tab_combinedRules}. On a large data set, such as the Adult data set, the change of rules between different runs is very small. Only one rule difference in the three runs. Even in a small data set, such as the Sick data set, nearly 90\% rules are consistent over three runs.

\begin{table}[h]
\begin{center}
\caption{The numbers of causal rules of different runs and the frequent causal rules }
\label{tab_combinedRules}
\begin{tabular}{|l|c|c|c|c|}
\hline
 fair data set &  1 & 2 & 3 & frequent \\
\hline
Adult &  46 &  46 &  45 &  46 \\
Hypothyroid &  31 &  30 &  30 &  30\\
Sick &  21 &  20 &  21 &  21 \\
\hline
\end{tabular}
\end{center}
\end{table}

\subsection{Results obtained using different matching methods}\label{sectionMatching}

As described in Section 3.4 (Definition 3.4), when creating a fair data set, different similarity measures can be used for finding matched pairs of records. In the experiments described so far, exact matching has been used. In order to gain some insights into the impact of different similarity measures, we also experimented on our method when Jaccard distance is used in matching a pair of records. Jaccard distance \cite{DataMiningBookHan2005} is a commonly used measure of the similarity between records with binary attributes. From Table \ref{tab_matching}, we see that the numbers of rules discovered are very similar across the three data sets with exact matching and the matching using Jaccard distance.

\begin{table}[h]
\begin{center}
\caption{Results of CR-CS using different matching methods}
\label{tab_matching}
\begin{tabular}{|l|c|c|c|c|}
\hline
Data set &  Exact matching & Jaccard distance \\
\hline
Adult &  46 &  46  \\
Hypothyroid &  30 &  31 \\
Sick &  21 &  22  \\
\hline
\end{tabular}
\end{center}
\end{table}

\subsection{Efficiency}\label{sectionEfficiency}

To test the time efficiency of CR-CS, we applied it to the Census Income (KDD) data set and  the last five synthetic data sets (with 10K records), to observe its scalability in terms of the number of records and the number of attributes respectively. The experiments were also done in comparison with the other three methods.

As the original CCC and CCU algorithms do not assume a fixed response variable, we ran them with the restriction of only looking for the triplets that contain the response variable. For our method, we ran it in two different versions: CR-CS1 and CR-CS2 respectively. With CR-CS1, we constrained the length of rules to 1, making it comparable with CCC, CCU and PC-select. With CR-CS2, the length of rules was restricted to 2 to allow the discovery of combined causes. CR-CS1 and CR-CS2 were implemented in Java, CCC and CCU were implemented in Matlab, and for PC-select, we used the pcSelect() function of the R package \emph{pcalg} \cite{pcalg,Kalisch:2012}. The comparisons were carried out using the same desktop computer (Quad core CPU 3.4 GHz and 16 GB of memory).

The execution time (in seconds) of  CR-CS1, CR-CS2, CCC and CCU with respect to the number of records in the Census Income (KDD) data is shown in Figure \ref{fig_scalability_records}. The execution time of PC-select is not included as it did not return results on any data set after two hours of execution. From the figure, we can see that CR-CS1 was much faster than CCC and CCU consistently for different record sizes, and even CR-CS2 was also faster than the other methods. The main reason is that our method employs association rule mining to remove non-eligible rules and thus to reduce the search space significantly.

The  execution time of CR-CS1, CR-CS2, CCC, CCU and PC-select with respect to the number of attributes is shown in Figure \ref{fig_scalability_attributes} (only the results returned within 6 hours are shown). Similarly, CR-CS1 is more scalable than CCC and CCU, while CR-CS2 is much slower when the number of attributes became big as the number of association rules increased significantly with the increase of the number of attributes, leading to additional time for testing causal rules. Although PC-select can achieve high quality of causal discovery (see Table \ref{tab_numberCAR-CCC-CCU_syn}), from the Figure \ref{fig_scalability_attributes}, we can see that PC-select is inefficient or even infeasible, especially when the number of variables is large.

\begin{figure}[t]
\begin{center}
\includegraphics[width=0.65\textwidth]{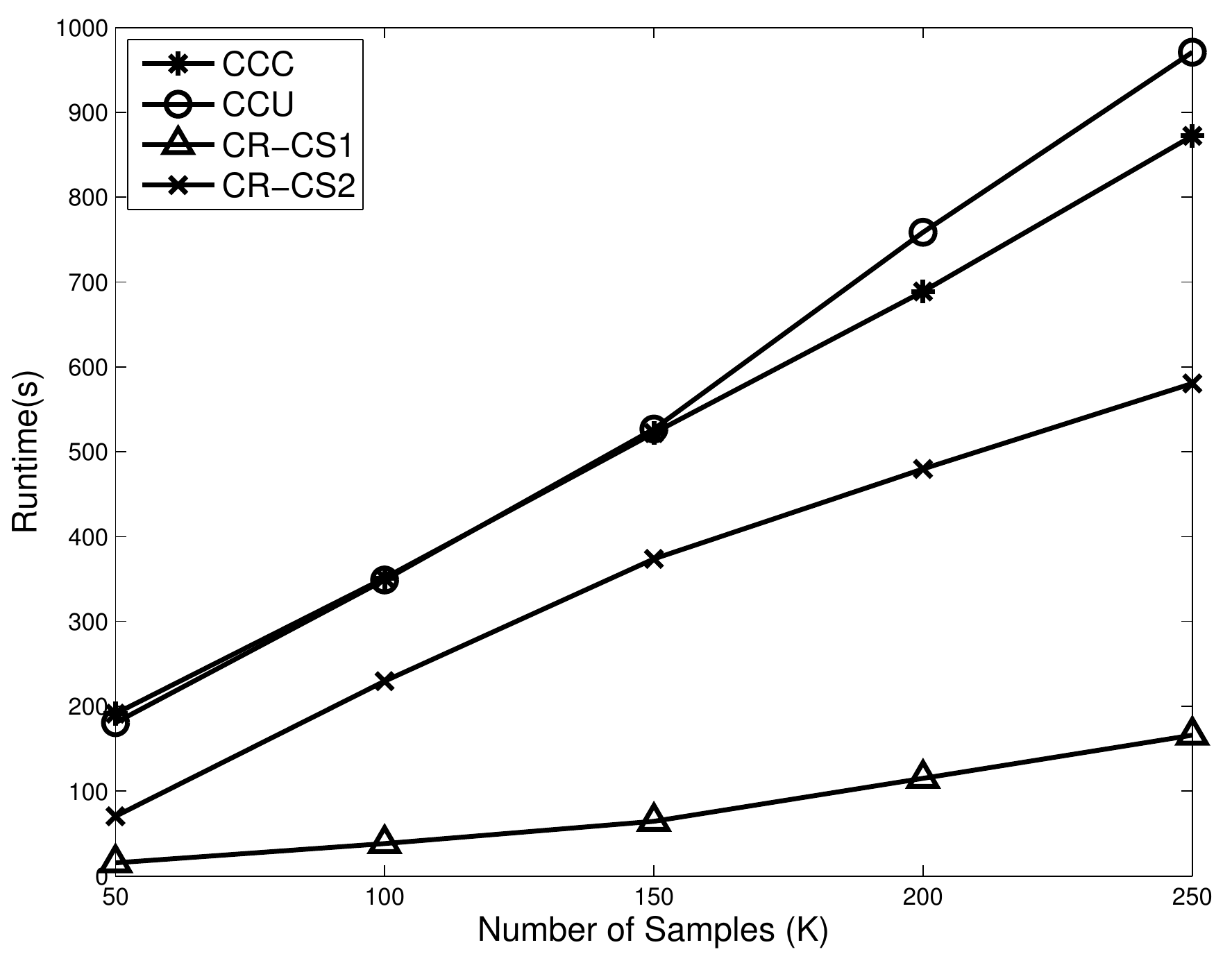}
\caption{Scalability with respect to number of records (note:PC-select is not included since it did not return results after two hours of execution)}
\label{fig_scalability_records}
\end{center}
\end{figure}

\begin{figure}[h]
\begin{center}
\includegraphics[width=0.65\textwidth]{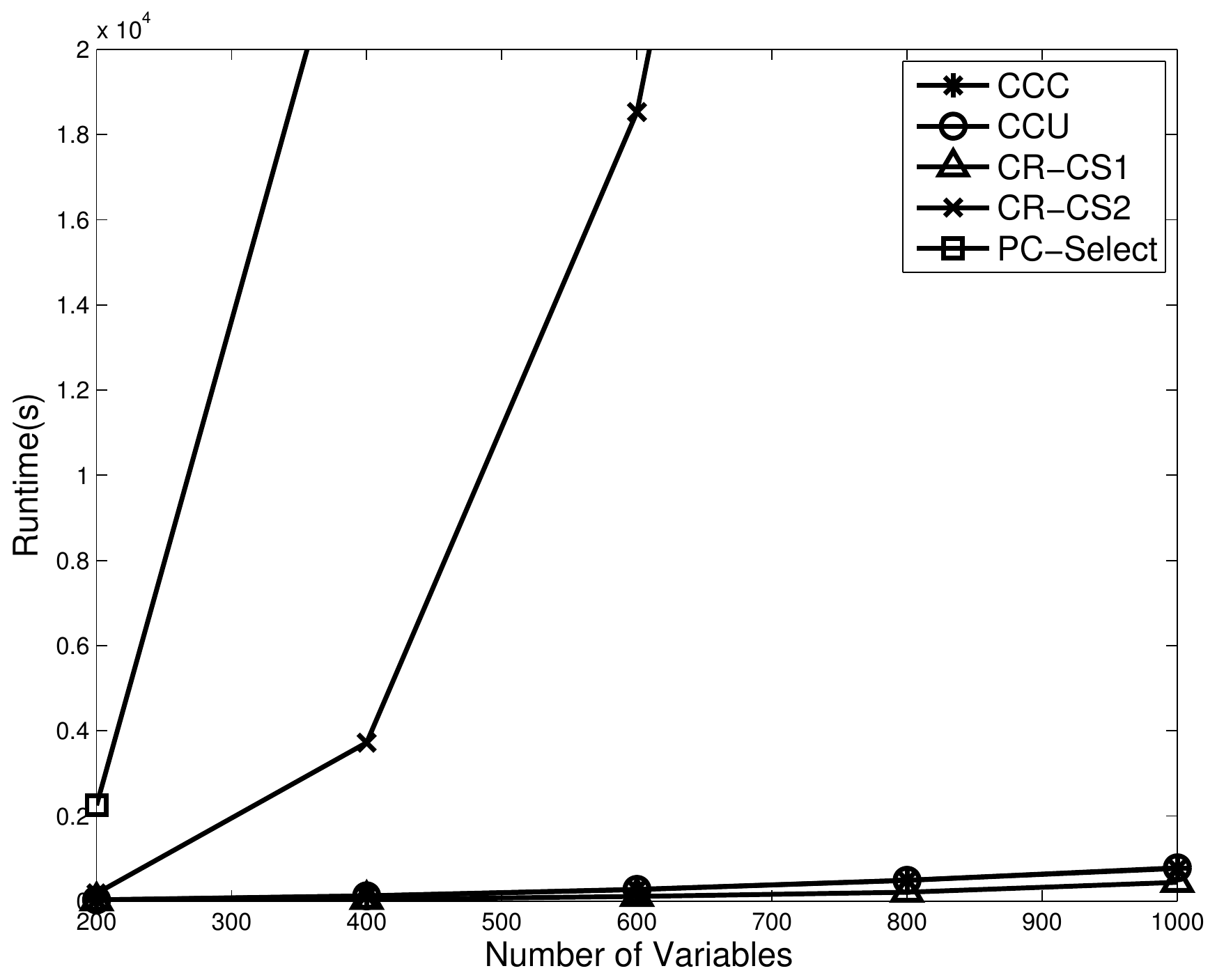}
\caption{Scalability with respect to number of attributes}
\label{fig_scalability_attributes}
\end{center}
\end{figure}

\section{Conclusion and future work}
In this paper, we have proposed  the concept of causal rules and have developed a method to find causal rules from observational data by integrating association rule mining with retrospective cohort studies. Through the integration, our method has been able to take the advantage of the high efficiency of association rule mining to produce candidate causal relationships from large data sets, and then to utilise the idea of cohort studies to obtain reliable causal rules based on the candidates.  The validity of the definition of causal rules has been justified to be consistent with the potential outcome model. Experiments results have shown that the proposed method is able to find more reasonable causal relationships comparing to the existing causal discovery methods. Moreover, our method was able to find causes consisting of combined variables, which are not possible to be uncovered by the other existing methods. We have shown that the method is faster than the efficient constraint based causal relationship discovery methods. Hence our method can be used as a promising alternative for causal discovery in large and high dimensional data sets. With the proposed method, the selection of control variable set is a key to discovering quality causal rules. The validation of the control variable set in real world applications will ensure the quality of causal rules discovered.

The proposed causal rule mining method and the constraint based causal discovery approaches tackle the problem of causal discovery from different directions. They each have their own strengths and limitations. Our future work will be studying how they complement each other and exploring integrated methods for efficient and quality causal relationship discovery.

\begin{acks}
This work has been partially supported by Australian Research Council Discovery Project DP130104090 and DP140103617.
\end{acks}

\bibliographystyle{ACM-Reference-Format-Journals}
\bibliography{CR-CS}

\received{xx 2014}{xx 2014}{xx 2015}

\end{document}